\DeclareMathOperator*{\argmin}{arg min} 
\DeclareMathOperator*{\supp}{supp}
\DeclareMathOperator*{\trace}{trace}
\newcommand{\defby}{\overset{\mathrm{\scriptscriptstyle{def}}}{=}}
\newtheorem{bigthm}{Theorem}
\newcommand{\R}{\mathbb{R}}
\newcommand{\C}{\mathbb{C}}
\newcommand{\Z}{\mathbb{Z}}
\newcommand{\N}{\mathbb{N}}
\newcommand{\vct}[1]{\bm{#1}}
\newcommand{\mtx}[1]{\bm{#1}}
 \newcommand{\scalprod}[2]{\left\langle #1,#2 \right\rangle}
\newtheorem{theorem}{Theorem}[]
\newtheorem{lemma}[theorem]{Lemma}
\newtheorem{proposition}[theorem]{Proposition}
\newtheorem{definition}[theorem]{Definition}
\begin{document}

\title{Stable image reconstruction using total variation minimization}

\author{Deanna Needell \and Rachel Ward\thanks{Supported in part by a Donald D. Harrington Faculty Fellowship, Alfred P. Sloan Research Fellowship, and DOD-Navy grant N00014-12-1-0743}}

\maketitle

\begin{abstract}
This article presents near-optimal guarantees for stable and robust image recovery from undersampled noisy measurements using total variation minimization.  In particular, we show
that from $O(s\log(N))$ nonadaptive linear measurements, an image can be
reconstructed to within the best $s$-term approximation of
its gradient up to a logarithmic factor, and this factor can be removed by taking slightly more measurements.  Along the way, we prove a strengthened Sobolev inequality for functions lying in the null space of suitably incoherent matrices.
\end{abstract}

\section{Introduction}

Compressed sensing (CS) provides the technology to exploit sparsity when acquiring signals of general interest, allowing for accurate and robust signal acquisition from surprisingly few measurements.  Rather than acquiring an entire signal and then later compressing, CS proposes a mechanism to collect measurements in compressed form, skipping the often costly step of complete acquisition.  The applications are numerous, and range from image and signal processing to remote sensing and error correction~\cite{CSwebpage}.  

In compressed sensing one acquires a signal $\vct{x} \in \C^d$ via $m \ll d$ linear measurements of the form ${y_k} = \langle \vct{\phi_k}, \vct{x} \rangle + {z_k}$.  The vectors $\vct{\phi_k}$ form the rows of the \textit{measurement matrix} $\mtx{\Phi}$, and the measurement vector $\vct{y} \in \C^m$ can thus be viewed in matrix notation as
$$
\vct{y} = \mtx{\Phi}\vct{x} + \vct{z},
$$
where $\vct{z}$ is the noise vector modeling measurement error.  We then ask to recover the signal of interest $\vct{x}$ from the noisy measurements $\vct{y}$.  Since $m\ll d$ this problem is ill-posed without further assumptions.  However, signals of interest in applications contain far less information than their dimension $d$ would suggest, often in the form of sparsity or compressibility in a given basis.   We call a vector $\vct{x}$  $s$-sparse when
\begin{equation}\label{eq:sparse}
\|\vct{x}\|_0 \defby |\supp(\vct{x})| \leq s \ll d.
\end{equation}
Compressible vectors are those which are approximated well by sparse vectors. 

In the simplest case, if we know that $\vct{x}$ is $s$-sparse and the measurements are free of noise, then the inverse problem $\vct{y} = \mtx{\Phi}\vct{x}$ is well-posed if the measurement matrix $\mtx{\Phi}$ is one-to-one on sparse vectors.  To recover $\vct{x} \in \C^d$ from $\vct{y} \in \C^m$ we solve the optimization problem
\begin{equation}\tag{$L_0$}
\hat{\vct{x}} = \argmin_{\vct{w}} \|\vct{w}\|_0 \quad\text{such that}\quad \mtx{\Phi}\vct{w} = \vct{y}.
\end{equation}

If $\mtx{\Phi}$ is one-to-one on $s$-sparse vectors and $\vct{x}$ is $s$-sparse, then $(L_0)$ recovers $\vct{x}$ exactly:  $\hat{\vct{x}} = \vct{x}$.  The optimization problem $(L_0)$ however is in general NP-Hard~\cite{N95} so we instead consider its relaxation to the $\ell_1$-norm,
\begin{equation}\tag{$L_1$}
\hat{\vct{x}} = \argmin_{\vct{w}} \|\vct{w}\|_1 \quad\text{such that}\quad \|\mtx{\Phi}\vct{w} - \vct{y}\|_2 \leq \varepsilon,
\end{equation}
where $\|\vct{w}\|_1 = \sum_i |w_i|$ and $\|\vct{w}\|_2 = \left(\sum_i w_i^2\right)^{1/2}$, and $\varepsilon$ bounds the noise level $\|\vct{z}\|_2 \leq \varepsilon$.  The problem $(L_1)$ may be cast as a second order cone program (SOCP) and can thus be solved efficiently using modern convex programming methods~\cite{CDS99:Atomic-Decomposition,DT97:Linear}.

If we require that the measurement matrix is not only one-to-one on $s$-sparse vectors, but moreover an approximate \emph{isometry} on $s$-sparse vectors, then $(L_1)$ will not only recover $s$-sparse signals exactly, but also recover nearly sparse signals approximately.  Cand\`es et.al. introduced the \textit{restricted isometry property} (RIP) in~\cite{CT05:Decoding} as a sufficient condition on the measurement matrix $\mtx{\Phi}$ for guaranteed robust recovery of compressible signals via $(L_1)$. 

\begin{definition}\label{def:rip}
A matrix $\mtx{\Phi} \in \mathbb{C}^{m \times d}$ is said to have the restricted isometry property of order $s$ and level $\delta \in (0,1)$  if
\begin{align}
\label{rip}
(1 - \delta) \| \vct{x} \|_2^2 \leq \| \mtx{\Phi} \vct{x} \|_2^2 \leq (1 + \delta) \| \vct{x} \|_2^2 \hspace{12mm} \textrm{ for all}\hspace{2mm} s \textrm{-sparse } \vct{x} \in \mathbb{C}^d .
\end{align}
The smallest such $\delta$ for which this holds is denoted by $\delta_s$ and called the \emph{restricted isometry constant} for the matrix $\mtx{\Phi}$.
\end{definition}
When $\delta_{2s} < 1$, the RIP guarantees that no $2s$-sparse vectors reside in the null space of $\mtx{\Phi}$.  When a matrix has a small restricted isometry constant, $\mtx{\Phi}$ acts as a near-isometry over the subset of $s$-sparse signals.  

Many classes of random matrices can be used to generate matrices having small RIP constants.  With probability exceeding $1-e^{-Cm}$, a matrix  whose entries are i.i.d. appropriately normalized Gaussian random variables has a small RIP constant $\delta_s < c$ when $m \gtrsim c^{-2} s\log(d/s)$.   This number of measurements is also shown to be necessary for the RIP  \cite{krahmer2010new}.
More generally, the restricted isometry property holds with high probability for any matrix generated by a subgaussian random variable \cite{CT04:Near-Optimal,MPJ06:Uniform,RV08:sparse,badadewa08}.    One can also construct matrices with the restricted isometry property using fewer random bits.  For example, if $m \gtrsim s \log^4(d)$ then the restricted isometry property holds with high probability for the \emph{random subsampled Fourier matrix} $\mtx{F}_{\Omega} \in \C^{m \times d}$, formed by restricting the $d \times d$ discrete Fourier matrix to a random subset of $m$ rows and re-normalizing \cite{RV08:sparse}.  The RIP also holds for randomly subsampled bounded orthonormal systems \cite{rw11, ra09-1} and randomly-generated circulant matrices \cite{RRT12:circulant}.

Cand\`es, Romberg, and Tao showed that when the measurement matrix $\mtx{\Phi}$ satisfies the RIP with sufficiently small restricted isometry constant, $(L_1)$ produces an estimation $\hat{\vct{x}}$ to $\vct{x}$ with error~\cite{CRT06:Stable},
\begin{equation}\label{eq:L1}
\|\hat{\vct{x}} - \vct{x}\|_2 \leq C\left(\frac{\|\vct{x} - \vct{x_s}\|_1}{\sqrt{s}} + \varepsilon \right).
\end{equation}
This error rate is optimal on account of classical results about the Gel'fand widths of the $\ell_1$ ball due to Kashin~\cite{Kas77:The-widths} and Garnaev--Gluskin~\cite{GG84:On-widths}.

Here and throughout, $\vct{x_s}$ denotes the vector consisting of the largest $s$ coefficients of $\vct{x}$ in magnitude.  Similarly, for a set $S$, $\vct{x}_S$ denotes the vector (or matrix, appropriately)  of $\vct{x}$ restricted to the entries indexed by $S$.  The bound~\eqref{eq:L1} then says that the recovery error is proportional to the noise level and the norm of the tail of the signal, $\vct{x} - \vct{x_s}$.  As a special case, when the signal is exactly sparse and there is no noise in the measurements, $(L_1)$ recovers $\vct{x}$ exactly.  We note that for simplicity, we restrict focus to CS decoding via the program ($L_1$), but acknowledge that other approaches in compressed sensing such as Compressive Sampling Matching Pursuit~\cite{NT08:Cosamp} and Iterative Hard Thresholding~\cite{BD08:Iterative} yield analogous recovery guarantees.

Signals of interest are often compressible with respect to bases other than the canonical basis.  We consider a vector $\vct{x}$ to be $s$-sparse with respect to the basis $\mtx{B}$ if
$$
\vct{x} = \mtx{B}\vct{z} \quad\text{for some $s$-sparse $\vct{z}$,}\quad
$$
and $\vct{x}$ is compressible with respect to this basis when it is well approximated by a sparse representation.  In this case one may recover $\vct{x}$ from underdetermined linear measurements $\vct{y} = \mtx{\Phi}\vct{x} + \vct{\xi}$ using the modified $\ell_1$ minimization problem

\begin{equation}\tag{$BL_1$}
\hat{\vct{x}} = \argmin_{\vct{w}} \|\mtx{B}^{*}\vct{w}\|_1 \quad\text{such that}\quad \|\mtx{\Phi}\vct{w} - \vct{y}\|_2 \leq \varepsilon, 
\end{equation}
where here and throughout $\mtx{B}^*$ denotes the conjugate transpose or adjoint of the matrix $\mtx{B}$. 
As before, the recovery error $\| \vct{x} -\vct{\hat{x}}  \|_2$ is proportional to the noise level and the norm of the tail of the signal if the composite matrix $\mtx{\Psi} = \mtx{\Phi} \mtx{B}$ satisfies the RIP.  If $\mtx{B}$ is a fixed orthonormal matrix and $\mtx{\Phi}$ is a random matrix generated by a subgaussian random variable, then $\mtx{\Psi} = \mtx{\Phi} \mtx{B}$ has RIP with high probability with $m \gtrsim s \log(d/s)$ due to the invariance of  norm-preservation for subgaussian matrices \cite{badadewa08}.  More generally, following the approach of \cite{badadewa08} and applying Proposition 3.2 in \cite{krahmer2010new}, this rotation-invariance holds for any $\mtx{\Phi}$ with the restricted isometry property and randomized column signs.   The rotational-invariant RIP also extends to the classic $\ell_1$-analysis problem which solves $(BL_1)$ when $\mtx{B^*}$ is a tight frame~\cite{CENR_Compressed}.

\subsection{Imaging with compressed sensing}  

Grayscale digital images have lower-dimensional structure than their ambient number of pixels suggests, consisting primarily of slowly-varying pixel intensities except around edges in the underlying image.  In other words, digital images are compressible with respect to their discrete gradient.  Concretely, we denote an $N\times N$ block of pixels by $\mtx{X} \in \C^{N \times N}$, and we write $X_{j,k}$ to denote any particular pixel.  The discrete directional derivatives of $\mtx{X} \in \C^{N \times N}$ are defined pixel-wise as

\begin{eqnarray}
\mtx{X}_x : \C^{N \times N} \rightarrow \C^{(N-1) \times N}, \quad \quad (\mtx{X}_x)_{j,k} &=&  \mtx{X}_{j+1,k} - \mtx{X}_{j,k}  \label{Xx} \\
\mtx{X}_y : \C^{N \times N} \rightarrow \C^{N \times (N-1)}, \quad \quad
(\mtx{X}_y)_{j,k} &=& \mtx{X}_{j,k+1} - \mtx{X}_{j,k}
  \end{eqnarray}

The discrete gradient transform $\mtx{\nabla:} \C^{N \times N} \rightarrow \C^{N \times N \times 2}$ is defined
in terms of these directional derivatives and in matrix form, 
\begin{equation}
\label{grad}
\big[\mtx{\nabla} \mtx{X}\big]_{j,k} \defby \left\{ \begin{array}{ll}
\big( (\mtx{X}_x)_{j,k}, (\mtx{X}_y)_{j,k} \big), & 1 \leq j \leq N-1, \quad 1 \leq k \leq N-1 \nonumber \\
\big( 0, (\mtx{X}_y)_{j,k} \big), & j=N, \quad 1 \leq k \leq N-1 \nonumber \\
\big((\mtx{X}_x)_{j,k}, 0 \big), & k=N, \quad 1\leq j \leq N-1 \nonumber \\
\big(0,0 \big), & j=k=N
\end{array}
\right.
\end{equation}
Finally, the \emph{total variation} seminorm of ${\bf X}$ is the $\ell_1$ norm of its discrete gradient,
 
\begin{equation}\label{eq:TV}
\| \mtx{X} \|_{TV} \defby \|\mtx{\nabla} \mtx{X}\|_1.
\end{equation}

We note here that we have defined the \textit{anisotropic} version of the total variation seminorm.  The \textit{isotropic} version of the total variation seminorm corresponds to taking the $\ell_1$ norm of the vector with components $(\mtx{X}_x)_{j,k} + i(\mtx{X}_y)_{j,k}$, and becomes the sum of terms
$$
\big|(\mtx{X}_x)_{j,k} + i(\mtx{X}_y)_{j,k}\big| = \left( (\mtx{X}_x)_{j,k}^2 + (\mtx{X}_y)_{j,k}^2\right)^{1/2}.
$$
The isotropic and anisotropic induced total variation seminorms are thus equivalent up to a factor of $\sqrt{2}$.  While we will write all results in terms of the anisotropic total variation seminorm, our results also extend to the isotropic version, see \cite{NW12:TV3} for more details.  

As natural images are well-approximated as piecewise-constant, it makes sense to choose from among the infinitely-many images agreeing with a set of underdetermined linear measurements the one having smallest total variation.   In the context of compressed sensing, the measurements $\vct{y}\in\C^m$ from an image $\mtx{X}$ are of the form $\vct{y} = {\cal M}(\mtx{X}) + \vct{\xi}$, where $\vct{\xi}$ is a noise term with bounded norm $\| \vct{\xi} \|_2 \leq \epsilon$, and ${\cal M} : \C^{N\times N}\rightarrow \C^m$ is a linear operator defined via its components by
$$
[{\cal M}(\mtx{X})]_j \defby \langle \mtx{M_j}, \mtx{X} \rangle = \trace(\mtx{M_j}\mtx{X}^*),
$$
for suitable matrices $\mtx{M_j}$.  Total variation minimization refers to the convex optimization problem
\begin{equation}\tag{TV}
\mtx{\hat{X}} = \argmin_{\mtx{Z}} \| \mtx{Z} \|_{TV}  \quad  \textrm{ such that }  \quad \| {\cal M}(\mtx{Z}) -\vct{y} \|_2   \leq \varepsilon.
\end{equation}
The standard theory of compressed sensing does not apply to total variation minimization. In fact, the gradient transform $\mtx{Z} \rightarrow \mtx{\nabla} \mtx{Z}$ not only fails to be orthonormal, but viewed as an invertible operator over mean-zero images, the Frobenius operator norm of its inverse grows linearly with the discretization level $N$.  Still, total variation minimization is widely used in compressed sensing applications and exhibits accurate image reconstruction empirically (see e.g.~\cite{CRT06:Stable,crt,candes05pr,osher2003image,chan2006total,lustig2007sparse,lustig2008compressed,liu2011total,nett2008tomosynthesis,mayin,kai2008suppression,keeling2003total}).  However, to the authors' best knowledge there have been no provable guarantees that $(TV)$ recovery is robust.

Images are also compressible with respect to wavelet transforms.  The Haar transform (and wavelet transforms more generally) is multi-scale, collecting information not only about local differences in pixel intensity, but also differences in average pixel intensities across all dyadic scales.  It should not be surprising then that the level of compressibility in the wavelet domain can be  controlled by the total variation seminorm ~\cite{devore1992image}.   In particular, we will use a result of Cohen, DeVore, Petrushev, and Xu which says that the rate of decay of the bivariate Haar wavelet coefficients of an image can be bounded by the total variation (see Proposition~\ref{cor:cdpx} in Section~\ref{sec:Poincare}).   

Recall that the (univariate) Haar wavelet system constitutes a complete orthonormal system for square-integrable functions on the unit interval, consisting of the constant function
$$
H^{0}(t) = \left\{ \begin{array}{ll} 1 & 0 \leq t < 1, \nonumber \\
0, & \textrm{otherwise},
\end{array} \right.
$$
the mother wavelet
\begin{equation}
\label{haarmother}
H^1(t) = \left\{ \begin{array}{ll} 1 & 0 \leq t < 1/2, \nonumber \\
-1 & 1/2 \leq t < 1,
\end{array} \right.
\end{equation}
and dyadic dilations and translates of the mother wavelet
\begin{equation}
\label{haarsystem}
H_{n,k}(t) = 2^{n/2} H^{1}(2^n t - k); \quad n \in \N, \quad 0 \leq k < 2^n.
\end{equation}
The bivariate Haar wavelet basis is an orthonormal basis for $L_2(Q)$, the space of square-integrable functions on the unit square $Q = [0,1)^2$, and is derived from the univariate Haar system by the usual tensor-product construction.  In particular, starting from the multivariate functions 
$$
H^{e}(u,v) = H^{e_1}(u) H^{e_2}(v),  \quad e = (e_1, e_2) \in V = \big\{ \{0,1\}, \{1,0\}, \{1,1\} \big\},
$$
the bivariate Haar system consists of the constant function $H^0(u,v) \equiv 1$, and all functions 
\begin{equation}
\label{bivariateH1}
x=(u,v), \quad \quad H_{j,k}^e(x) = 2^j H^e(2^j x - k), \quad e \in V, \quad j \geq 0, \quad k \in \Z^2 \cap 2^j Q 
\end{equation}
Discrete images are isometric to the space $\Sigma_N \subset L_2(Q)$ of piecewise-constant functions
\begin{equation}
\label{isometry}
\Sigma_{N} = \left\{ f \in L_2(Q), \quad f(u,v) = c_{j,k}, \quad \frac{j-1}{N} \leq u < \frac{j}{N}, \quad  \frac{k-1}{N} \leq v < \frac{k}{N}\right\}
\end{equation}
via the identification $c_{j,k} = N X_{j,k}$. 
Let $N = 2^n$, and consider the bivariate Haar basis restricted to the $N^2$ basis functions 
$H^0 \cup \{ H_{j,k}^e \}_{0 \leq j \leq n-1, \hspace{.5mm} k \in \mathbb{Z}^2 \cap 2^j Q}^{ e \in V}$.  Identified via \eqref{isometry} as discrete images $\vct{h}^0$ and 
$(\vct{h}_{j, k}^e)$ respectively, this system forms an orthonormal basis for $\C^{N \times N}$.  We denote by ${\cal H}(\mtx{X})$ the matrix product that computes the \emph{discrete bivariate Haar transform} $\mtx{X} \rightarrow (\scalprod{\mtx{X}}{\vct{h}^0}, \scalprod{ \mtx{X}}{ \vct{h}_{j,k}^e} )$.  

Because the bivariate Haar transform is orthonormal,  standard CS results guarantee that images can be reconstructed up to a factor of their best approximation by $s$ Haar basis functions using $m \gtrsim s\log(N)$ measurements.   One might then consider $\ell_1$-minimization of the Haar coefficients, that is, $(BL_1)$ with orthonormal transform $\mtx{B} = {\cal H}$, as an alternative to total variation minimization.    However, total variation minimization gives better empirical image reconstruction results than $\ell_1$-Haar wavelet coefficient minimization, despite not being fully justified by compressed sensing theory. For details, see~\cite{candes05pr,CRT06:Stable,duarte08si} and references therein.  
For example, Figure~\ref{fig:camera} compares reconstructions of the Cameraman image from only $20\%$ of its discrete Fourier coefficients, using total variation minimization and $\ell_1$-Haar minimization.   

When the measurements are corrupted by additive noise, $\vct{y} = \mtx{\Phi}\vct{x} + \vct{\xi}$, the story is similar.  Figure~\ref{fig:LenaNoise} displays the original Fabio image, corrupted with additive Gaussian noise.  Again, we compare the performance of (TV) and $(BL_1)$ at reconstruction using $20\%$ Fourier measurements.   As is evident, TV-minimization outperforms Haar minimization in the presence of noise as well.  Another type of measurement noise is a consequence of round-off or quantization error.  This type of error may stem from the inability to take measurements with arbitrary precision, and differs from Gaussian noise since it depends on the signal itself.  Figure~\ref{fig:FabioNoise} displays the lake image with quantization error along with the recovered images.  As in the case of Gaussian noise, TV-minimization outperforms Haar minimization.
All experiments here and throughout used the software $\ell_1$-magic to solve the minimization programs~\cite{L1Magic}.

\begin{figure}[ht]
\centering

\subfigure[]{
   \includegraphics[width=1in] {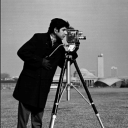}
 }
 \subfigure[]{
   \includegraphics[width=1in] {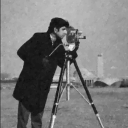}
 }
 \subfigure[]{
   \includegraphics[width=1in, height=1in] {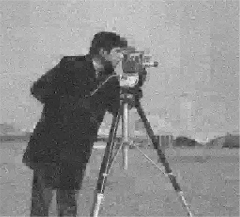}
 }

\caption{(a) Original $256 \times 256$ Cameraman image and its reconstruction from $20\%$ of its Fourier coefficients using (b) total variation minimization  and (c) $\ell_1$ minimization of its bivariate Haar coefficients.}\label{fig:camera}
\end{figure}

\begin{figure}[ht]
\centering

\subfigure[]{
   \includegraphics[width=1in] {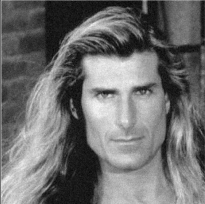}
 }
 \subfigure[]{
   \includegraphics[width=1in] {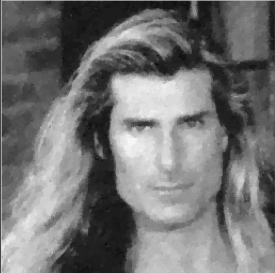}
 }
 \subfigure[]{
   \includegraphics[width=1in] {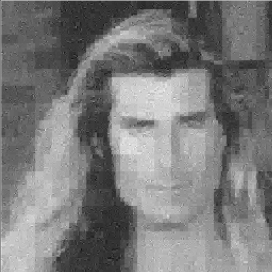}
 }

\caption{(a) Original $256 \times 256$ Fabio image corrupted with Gaussian noise and its reconstruction from $20\%$ of its Fourier coefficients using (b) total variation minimization and (c) $\ell_1$-minimization of its bivariate Haar coefficients}\label{fig:LenaNoise}
\end{figure}

\begin{figure}[ht]
\centering

\subfigure[]{
   \includegraphics[width=1in] {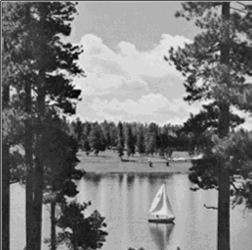}
 }
 \subfigure[]{
   \includegraphics[width=1in] {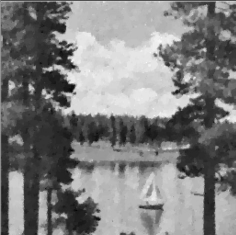}
 }
 \subfigure[]{
   \includegraphics[width=1in] {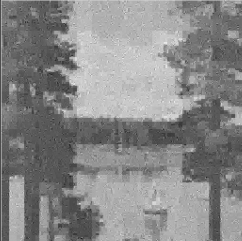}
 }

\caption{(a) Original $256 \times 256$ lake image corrupted with quantization noise and its reconstruction from $20\%$ of its  Fourier coefficients using (b) total variation minimization and (c) $\ell_1$-minimization of its bivariate Haar coefficients.}\label{fig:FabioNoise}
\end{figure}

We note that the use of total variation regularization in image processing predates the theory of compressed sensing.  The seminal paper of Rudin, Osher, and Fatemi introduced total variation regularization in imaging \cite{rudin1992nonlinear} and subsequently total variation has become a regularizer of choice for image denoising, deblurring, impainting, and segmentation  ~\cite{candes2002new,osher2003image,strong2003,chan2006total,chambolle2004}.  For more details on the connections between total variation minimization and wavelet frame-based methods in image analysis, we refer the reader to \cite{camreport}.

\subsection{Contribution of this paper}

We show that there are choices of underdetermined linear measurements (constructed from RIP matrices) for which the total variation minimization program $(TV)$ is guaranteed to recover images stably and robustly up to the best $s$-term approximation of their gradient.  The error guarantees are analogous to those of~\eqref{eq:L1} up to a logarithmic factor, which we show can be removed by taking slightly more measurements (see Theorem~\ref{thm2} below).  Precisely, we have 

\begin{bigthm}\label{introThm}
Fix integers $m, N,$ and $s$ such that $m \geq C_1s \log(N^2/s)$.  There exist linear operators ${\cal M}: \C^{N \times N} \rightarrow \C^{m}$ for which the following holds for all $\mtx{X} \in \C^{N \times N}.$  Suppose we observe noisy measurements $\vct{y} = {\cal M}(\mtx{X}) + \vct{\xi}$ with noise level $\| \vct{\xi} \|_2 \leq \varepsilon$. Then the solution 
\begin{eqnarray}
\mtx{\hat{X}} = \argmin_{\mtx{Z}} \| \mtx{Z} \|_{TV}  \quad  \textrm{such that}  \quad \| {\cal M}(\mtx{Z}) - \vct{y} \|_2   \leq \varepsilon
\end{eqnarray}
satisfies 
\begin{equation}
\| \mtx{X}-\mtx{\hat{X}} \|_2 \leq  C_2\log(N^2/s)\Big( \frac{ \| \mtx{\nabla}\mtx{X} - (\mtx{\nabla}\mtx{X})_s \|_1}{\sqrt{s}}+ \varepsilon\Big).
\end{equation}
Here, $C_1$ and $C_2$ are universal constants independent of everything else.
\end{bigthm}
 For details about the construction of the measurements, see Theorem~\ref{thm:gen} and the remarks following.

\subsection{Previous work on TV minimization in compressed sensing}

The last few years have witnessed numerous algorithmic advances that allow the efficient implementation of total variation minimization (TV), such as the split Bregman algorithm proposed by \cite{oshergold}, based on the Bregman distance \cite{bregman}.  Several algorithms are designed to exploit the structure of Fourier measurements for further speed-up; see for example \cite{yang10, dias07}.  Image reconstruction via independent minimization of the directional derivatives $\mtx{X}_x$ and $\mtx{X}_y$ was observed in \cite{cdpx} to give superior empirical results. 

With respect to theory,~\cite{crt} showed that if an image $\mtx{X}$ has an exactly sparse gradient, then $(TV)$ recovers the image exactly from a small number of partial Fourier measurements.   Moreover, because the discrete Fourier transform commutes with the discrete gradient operator, one may change coordinates in this case and re-cast $(TV)$ as an $\ell_1$ program $(L_1)$ with respect to the discrete gradient image  \cite{pmgc} to derive stable \emph{gradient} recovery results.   

However, robust recovery of the gradient need not imply robust recovery of the image itself.  To see this, suppose the error $\mtx{\nabla}\mtx{X} -  \mtx{\nabla}\hat{\mtx{X}}$ in the recovery of the gradient has a single non-zero component, of size $\alpha$, located at pixel $(1,1)$.  That is, the gradient is recovered perfectly except at one pixel location, namely the upper left corner.  Then based on this alone, it is possible that every pixel in $\mtx{\hat{X}}$ differs from that in $\mtx{X}$ by the amount $\alpha$.  This accumulation of error means that even when the reconstructed gradient is close to the gradient of $\mtx{X}$, the images $\mtx{\hat{X}}$ and $\mtx{X}$ may be drastically different, magnified by a factor of $N^2$.  Even for mean-zero images, the error may be magnified by a factor of $N$, as for images $\mtx{X}$ with pixels $X_{j,k} = j$.  We show that due to properties of the null space of RIP matrices,  the (TV) reconstruction error $\mtx{X} - \mtx{\hat{X}}$ in \eqref{introThm} cannot propagate as such.

Recent work in~\cite{ndeg:cosparse} presents an \textit{analysis co-sparse model} which considers signals sparse in the analysis domain.  A series of theoretical and numerical tools are developed to solve the analysis problem $(BL_1)$ in a general framework.  In particular, the analysis operator may be the finite difference operator, which concatenates the vertical and horizontal derivatives into a single vector and is thus closely linked with the total variation operator.  Effective pursuit methods are also proposed to solve such problems under the analysis co-sparse prior assumption.  We refer the reader to~\cite{ndeg:cosparse} for details.

We note that our robustness recovery results for (TV) are specific to two-dimensional images,  as the embedding theorems we rely on do not hold for one-dimensional arrays.  Thus, our results do not imply robust recovery for one-dimensional piecewise constant signals.  Robustness for the recovery of the gradient support for piecewise constant signals was studied in~\cite{sparseanalysis2012}.  On the other hand, the results in this paper were recently extended in~\cite{NW12:TV3} to higher dimensional signals, $\mtx{X}\in\C^{N^d}$ for $d \geq 3$.

\subsection{Organization}  The paper is organized as follows.  Section~\ref{sec:main} contains the statement of our main results about robust total variation recovery.  The proof of our main results will occupy most of the remainder of the paper.  We first prove robust recovery of the image gradient in Section~\ref{sec:grad}.  In Section~\ref{sec:Poincare} we derive a strong Sobolev inequality for discrete images lying in the null space of an RIP matrix which will bound the image recovery error by its total variation.   Our result relies on a result by Cohen, DeVore, Petrushev, and Xu that the compressibility of the bivariate Haar wavelet transform is controlled by the total variation of an image.  We prove Theorem~\ref{introThm} by way of Theorem~\ref{thm:gen} in Section~\ref{sec:prfgen}.  We prove Theorem~\ref{thm2}, showing that the logarithmic factor of Theorem~\ref{introThm} can be removed by taking slightly more measurements in Section \ref{sec:thm2}.  We conclude in Section~\ref{sec:conc} with some brief discussion.  Proofs of intermediate propositions are included in the appendix. 

\section{Main results}\label{sec:main}

Our main results use the following proposition which generalizes the results used implicitly in the recovery of sparse signals using $\ell_1$ minimization.  It allows us to bound the norm of an entire signal when the signal (a) is close to the null space of an RIP matrix and (b) obeys an $\ell_1$ cone constraint.   In particular,  \eqref{eq:h2} is just a generalization of results in \cite{crt}, while \eqref{eq:h1} follows from \eqref{eq:h2} and the cone-constraint \eqref{cc}.  The proof of Proposition \ref{cone-tube} is contained in the appendix.

\begin{proposition}
\label{cone-tube}
Suppose that ${\cal A}$ satisfies the restricted isometry property of order $5k \gamma^2$, for some $\gamma \geq 1$, and level $\delta < 1/3$, and suppose that the image $\mtx{D}$ satisfies a tube constraint
$$
\| {\cal A}(\mtx{D}) \|_2 \lesssim \varepsilon.
$$
Suppose further that for a subset $S$ of cardinality $|S| \leq k$, $\mtx{D}$ satisfies the cone-constraint
\begin{equation}
\label{cc}
\| \mtx{D}_{S^c} \|_1 \leq \gamma \| \mtx{D}_S \|_1 + \sigma.
\end{equation}
Then
\begin{equation}\label{eq:h2}
\| \mtx{D} \|_2 \lesssim \frac{\sigma}{\gamma \sqrt{k}} + \varepsilon
\end{equation}
 and
\begin{equation}\label{eq:h1}
\| \mtx{D} \|_1 \lesssim \sigma + \gamma \sqrt{k} \varepsilon.
\end{equation}
\end{proposition}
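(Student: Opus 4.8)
\section*{Proof proposal}

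The plan is to adapt the standard ``bucketing'' (tube-and-cone) argument of Cand\`es, Romberg, and Tao to the generalized cone parameter $\gamma$, tracking constants carefully so that the RIP of order $5k\gamma^2$ and level $\delta < 1/3$ suffice. Throughout I treat $\mtx{D}$ as a vector whose entries are indexed, so that $\mtx{D}_T$ denotes its restriction to a coordinate set $T$ and the RIP hypothesis controls ${\cal A}$ on vectors supported on sets of size at most $5k\gamma^2$.

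First I would establish a shelling estimate for the tail. Set $S_0 = S$, sort the coordinates of $S_0^c$ in decreasing order of magnitude, and partition them into consecutive blocks $S_1, S_2, \dots$ each of size $M = 4\gamma^2 k$ (the last block possibly smaller). Since every entry of $\mtx{D}_{S_{j+1}}$ is bounded by the average magnitude $\|\mtx{D}_{S_j}\|_1 / M$ of the preceding block, one gets $\|\mtx{D}_{S_{j+1}}\|_2 \le \|\mtx{D}_{S_j}\|_1/\sqrt{M}$, and summing over $j \ge 1$ yields
\begin{equation}
\sum_{j \ge 2}\|\mtx{D}_{S_j}\|_2 \;\le\; \frac{1}{\sqrt{M}}\,\|\mtx{D}_{S_0^c}\|_1 .
\end{equation}
Next I would invoke the tube constraint through the RIP. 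Let $T_0 = S_0 \cup S_1$, so that $|T_0| \le k + M = (1 + 4\gamma^2)k \le 5\gamma^2 k$ (using $\gamma \ge 1$) and the RIP applies both to $T_0$ and to each block $S_j$. Writing ${\cal A}(\mtx{D}_{T_0}) = {\cal A}(\mtx{D}) - \sum_{j \ge 2}{\cal A}(\mtx{D}_{S_j})$, the lower RIP bound on $T_0$, the tube hypothesis $\|{\cal A}(\mtx{D})\|_2 \lesssim \varepsilon$, and the upper RIP bound on each $S_j$ combine with the shelling estimate to give
\begin{equation}
\|\mtx{D}_{T_0}\|_2 \;\lesssim\; \varepsilon + \frac{1}{\sqrt{M}}\,\|\mtx{D}_{S_0^c}\|_1 .
\end{equation}
Since $\|\mtx{D}\|_2 \le \|\mtx{D}_{T_0}\|_2 + \sum_{j \ge 2}\|\mtx{D}_{S_j}\|_2$, the same right-hand side (up to constants) bounds all of $\|\mtx{D}\|_2$. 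I would then feed in the cone constraint via $\|\mtx{D}_{S_0^c}\|_1 \le \gamma\|\mtx{D}_{S_0}\|_1 + \sigma \le \gamma\sqrt{k}\,\|\mtx{D}\|_2 + \sigma$, producing an inequality of the form $\|\mtx{D}\|_2 \le C_1 \varepsilon + C_2 (\gamma\sqrt{k}\,\|\mtx{D}\|_2 + \sigma)/\sqrt{M}$.

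The crux is the self-improving step. With $M = 4\gamma^2 k$ the coefficient of $\|\mtx{D}\|_2$ on the right is $C_2\gamma\sqrt{k}/\sqrt{M} = C_2/2$, where $C_2 = \sqrt{(1+\delta)/(1-\delta)}$ arises from the RIP constants; the hypothesis $\delta < 1/3$ comfortably forces $C_2/2 < 1$, so this term can be absorbed into the left-hand side, yielding $\|\mtx{D}\|_2 \lesssim \sigma/(\gamma\sqrt{k}) + \varepsilon$, which is \eqref{eq:h2}. This absorption---together with the check that the single choice $M = 4\gamma^2 k$ simultaneously keeps $|T_0| \le 5\gamma^2 k$ and drives the feedback coefficient below $1$ under $\delta < 1/3$---is the only delicate point; everything else is bookkeeping. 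Finally \eqref{eq:h1} follows immediately: the cone constraint gives $\|\mtx{D}\|_1 = \|\mtx{D}_{S_0}\|_1 + \|\mtx{D}_{S_0^c}\|_1 \le (1+\gamma)\|\mtx{D}_{S_0}\|_1 + \sigma \le 2\gamma\sqrt{k}\,\|\mtx{D}\|_2 + \sigma$, and substituting \eqref{eq:h2} for $\|\mtx{D}\|_2$ yields $\|\mtx{D}\|_1 \lesssim \sigma + \gamma\sqrt{k}\,\varepsilon$, as claimed.
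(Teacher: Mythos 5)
Your architecture matches the paper's proof almost exactly — blocks of size $M = 4\gamma^2 k$, the set $T_0 = S_0 \cup S_1$ with $|T_0| \le 5\gamma^2 k$, the shelling estimate, and the deduction of \eqref{eq:h1} from \eqref{eq:h2} are all the same — but the step you yourself flag as the crux contains a genuine error, and as written it fails. Track your own constants: the RIP/tube step gives
\begin{equation*}
\| \mtx{D}_{T_0} \|_2 \le \frac{C}{\sqrt{1-\delta}}\,\varepsilon + \sqrt{\tfrac{1+\delta}{1-\delta}}\cdot\tfrac{1}{\sqrt{M}} \| \mtx{D}_{S_0^c} \|_1,
\end{equation*}
and when you pass to $\| \mtx{D} \|_2 \le \| \mtx{D}_{T_0} \|_2 + \sum_{j\ge 2} \| \mtx{D}_{S_j} \|_2$, the tail contributes a \emph{further} $\tfrac{1}{\sqrt{M}}\| \mtx{D}_{S_0^c} \|_1$ with coefficient $1$. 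So the constant multiplying $(\gamma\sqrt{k}\,\| \mtx{D} \|_2 + \sigma)/\sqrt{M}$ is $C_2 = 1 + \sqrt{(1+\delta)/(1-\delta)}$, not $\sqrt{(1+\delta)/(1-\delta)}$ as you assert. With $M = 4\gamma^2 k$ the feedback coefficient is $\tfrac{1}{2}\bigl(1 + \sqrt{(1+\delta)/(1-\delta)}\bigr) \ge 1$ for \emph{every} $\delta \ge 0$, so the term can never be absorbed into the left-hand side. Enlarging the blocks does not rescue this: the RIP budget forces $k + M \le 5\gamma^2 k$, hence $\gamma\sqrt{k}/\sqrt{M} \ge 1/\sqrt{5}$, and since the hypothesis allows $\delta$ arbitrarily close to $1/3$, where $C_2$ approaches $1+\sqrt{2}$, the coefficient can still reach $(1+\sqrt{2})/\sqrt{5} > 1$.

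The repair is to do the absorption one level down, which is exactly what the paper does. In the cone step, bound $\| \mtx{D}_{S_0} \|_1 \le \sqrt{k}\, \| \mtx{D}_{S_0} \|_2 \le \sqrt{k}\, \| \mtx{D}_{T_0} \|_2$ (not $\sqrt{k}\,\| \mtx{D} \|_2$), and feed this into the inequality for $\| \mtx{D}_{T_0} \|_2$ \emph{before} adding the tail. The feedback coefficient is then genuinely $\tfrac{1}{2}\sqrt{(1+\delta)/(1-\delta)}$, which is below $1$ precisely when $\delta < 3/5$; in the paper this appears as the requirement $\sqrt{1-\delta} - \tfrac{1}{2}\sqrt{1+\delta} > 0$, comfortably satisfied for $\delta < 1/3$, yielding $\| \mtx{D}_{T_0} \|_2 \lesssim \varepsilon + \sigma/(\gamma\sqrt{k})$. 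Only then do you add the tail, via $\sum_{j\ge 2}\| \mtx{D}_{S_j} \|_2 \le \tfrac{1}{2}\| \mtx{D}_{S_0} \|_2 + \tfrac{\sigma}{2\gamma\sqrt{k}} \le \tfrac{1}{2}\| \mtx{D}_{T_0} \|_2 + \tfrac{\sigma}{2\gamma\sqrt{k}}$, which costs a multiplicative $3/2$ and an additive $\sigma$-term rather than re-entering the inequality. With this reordering your argument goes through verbatim; the remainder of your proposal, including the derivation of \eqref{eq:h1}, is correct and identical to the paper's.
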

Neither the RIP level of $5k \gamma^2$ nor the restricted isometry constant $\delta < 1/3$ are sharp; for instance, an RIP level of $2s$ and restricted isometry constant $\delta_{2s} \approx .4931$ are sufficient for Proposition \ref{cone-tube} with $\gamma = 1$ \cite{moli,Can08:Restricted-Isometry}. 

 For simplicity of presentation, we say that a linear operator ${\cal A}: \C^{N_1 \times N_2} \rightarrow \C^m$ has the restricted isometry property (RIP) of order $s$ and level $\delta \in (0,1)$  if
\begin{align}
\label{Operatorrip}
(1 - \delta) \| \mtx{X} \|_2^2 \leq \| {\cal{A}}( \mtx{X}) \|_2^2 \leq (1 + \delta) \| \mtx{X} \|_2^2 \hspace{6mm} \textrm{ for all}\hspace{2mm} s \textrm{-sparse } \mtx{X} \in \mathbb{C}^{N_1\times N_2} .
\end{align}
Here and throughout, $\|\mtx{X}\|_p = \left(\sum_{j,k}|\mtx{X}_{j,k}|^p\right)^{1/p}$ denotes the entrywise $\ell_p$-norm of the image $\mtx{X}$, treating the image as a vector.  In particular, $p=2$ is the Frobenius norm 
$$
\|\mtx{X}\|_2 = \sqrt{\sum_{j,k}  \left| X_{j,k} \right|^2 } = \sqrt{\textrm{tr}(\mtx{X}\mtx{X}^* )}.
$$
This norm is generated by the image inner product
\begin{equation}
\label{imageinnerprod}
\langle \mtx{X}, \mtx{Y} \rangle = \trace(\mtx{X}\mtx{Y}^*).
\end{equation}
Note that if the linear operator ${\cal A}$ is given by
$$
\left({\cal A}(\mtx{X})\right)_j = \langle \mtx{A_j}, \mtx{X}\rangle,
$$
then ${\cal A}$ satisfies this RIP precisely when the matrix whose rows consist of $\mtx{A_j}$ unraveled into vectors satisfies the standard RIP as defined in~\eqref{def:rip}.  There is thus clearly a one-to-one correspondence between RIP for linear operators ${\cal A}: \C^{N_1 \times N_2} \rightarrow \C^m$ and RIP for matrices $\mtx{\Phi} \in \C^{m \times (N_1 N_2)}$, and we treat these notions as equivalent.  
Finally, we will use the notation $u \gtrsim v$ to indicate that there exists some absolute constant $C > 0$ such that $u \geq C v$.  We use the notation $u \lesssim v$ accordingly.  In this article, $C>0$ will always denote a universal constant that might be different in each occurrence.   

Before presenting the main results, it will be helpful to first determine what form an optimal error recovery bound takes in the setting of image reconstruction via total variation minimization.  In standard compressed sensing, the optimal minimax error rate from $m \gtrsim s\log(N^2/s)$ nonadaptive linear measurements is
\begin{equation}
\|\hat{\vct{x}} - \vct{x}\|_2 \lesssim \frac{\|\vct{x} - \vct{x_s}\|_1}{\sqrt{s}} + \varepsilon.
\end{equation}
In the setting of images, this implies that the best possible error rate from $m \gtrsim s \log(N^2/s)$ linear measurements is at best:
\begin{equation}
\label{matrix:optbound}
\|\hat{\mtx{X}} - \mtx{X}\|_2 \lesssim \frac{\| \mtx{\nabla}\mtx{X} - (\mtx{\nabla}\mtx{X})_s \|_1}{\sqrt{s}} + \varepsilon.
\end{equation}

Above,  $(\mtx{\nabla}\mtx{X})_s$ is the best $s$-sparse approximation to the discrete gradient $\mtx{\nabla}\mtx{X}$.  To see that we could not possibly hope for a better error rate, observe that if we could, we would reach a contradiction in light of the norm of the discrete gradient operator: $\| \mtx{\nabla}\mtx{Z} \|_2 \leq 4 \| \mtx{Z} \|_2$.

Theorem \ref{thm:gen} guarantees a recovery error proportional to \eqref{matrix:optbound} up to a single logarithmic factor $\log(N^2/s)$. That is, the recovery error of Theorem \ref{thm:gen} is optimal up to at most a logarithmic factor.  We see in Theorem~\ref{thm2} that by taking more measurements, we obtain the optimal recovery error, without the logarithmic term.
  
To change coordinates from pixel domain to gradient domain, it will be useful for us to consider matrices $\mtx{\Phi}_0$ and $\mtx{\Phi}^0$ obtained from a matrix $\mtx{\Phi}$ by concatenating a row of zeros to the bottom and top of $\mtx{\Phi}$, respectively.  Concretely, for a matrix $\mtx{\Phi} \in \C^{(N-1) \times N}$, we denote by ${\mtx{\Phi}^0} \in \C^{N \times N}$ the augmented matrix $\mtx{\Phi}^0$ with entries

\begin{equation}
(\mtx{\Phi^0})_{j,k} = \left\{ \begin{array}{ll} 0, & j = 1 \\
\Phi_{j-1,k}, & 2 \leq j \leq N
\end{array}  \right.
\label{zeropad}
\end{equation}
We denote similarly by $\mtx{\Phi_0}$ the matrix resulting by adding an additional row of zeros to the bottom of $\mtx{\Phi}$. 

We can relate measurements using the padded matrices \eqref{zeropad} of the entire image to measurements of its directional gradients, as defined in \eqref{Xx}.  The following relation can be verified by direct algebraic manipulation and so the proof is omitted.

\begin{lemma}
\label{padderiv}
Given $\mtx{X} \in \C^{N \times N}$ and $\mtx{\Phi} \in \C^{(N-1) \times N}$,
\begin{equation}
\label{padrelatex}
\scalprod{\mtx{\Phi}}{\mtx{X}_x} = \scalprod{\mtx{\Phi}^0}{\mtx{X}} - \scalprod{\mtx{\Phi}_0}{\mtx{X}} \nonumber
\end{equation}
and
\begin{equation}
\label{padrelatey}
\scalprod{\mtx{\Phi}}{\mtx{X}_y^T} = \scalprod{\mtx{\Phi}^0}{\mtx{X}^T} - \scalprod{\mtx{\Phi}_0}{\mtx{X}^T}, \nonumber
\end{equation}
where $\mtx{X}^T$ denotes the (non-conjugate) transpose of the matrix $\mtx{X}$.
\end{lemma}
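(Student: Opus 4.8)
The plan is to reduce both identities to elementary index bookkeeping on the entrywise form of the image inner product, $\scalprod{\mtx{A}}{\mtx{B}} = \trace(\mtx{A}\mtx{B}^*) = \sum_{j,k} A_{j,k}\,\overline{B_{j,k}}$, so that everything comes down to recognizing how the zero-padding in \eqref{zeropad} implements a vertical shift of the rows of $\mtx{\Phi}$.

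For the first identity I would expand the left-hand side using the definition $(\mtx{X}_x)_{j,k} = \mtx{X}_{j+1,k} - \mtx{X}_{j,k}$ from \eqref{Xx}, splitting the double sum over $1 \le j \le N-1$, $1 \le k \le N$ into a shifted piece $\sum_{j,k} \Phi_{j,k}\,\overline{X_{j+1,k}}$ and a diagonal piece $\sum_{j,k}\Phi_{j,k}\,\overline{X_{j,k}}$. I would then expand the two terms on the right-hand side. Because $(\mtx{\Phi}^0)_{1,k}=0$, the sum defining $\scalprod{\mtx{\Phi}^0}{\mtx{X}}$ runs effectively over $2 \le j \le N$, and reindexing $j \mapsto j+1$ turns it into exactly the shifted piece. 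Likewise, since the appended bottom row of $\mtx{\Phi}_0$ vanishes, $\scalprod{\mtx{\Phi}_0}{\mtx{X}}$ collapses to the diagonal piece. Subtracting the two reproduces precisely the expansion of the left-hand side, establishing the first identity.

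For the second identity I would avoid repeating the computation by observing the transpose relation $(\mtx{X}^T)_x = \mtx{X}_y^T$, which holds entrywise: evaluating at position $(a,b)$, both sides equal $X_{b,a+1}-X_{b,a}$, directly from the definition \eqref{Xx} of the horizontal derivative and the definition of $\mtx{X}_y$. Substituting $\mtx{X}^T$ in place of $\mtx{X}$ in the first identity then yields the second one immediately. The non-conjugate transpose is the correct operation here because $(\mtx{X}^T)_x = \mtx{X}_y^T$ is an identity between the (unconjugated) entries, hence unaffected by the conjugation carried by the inner product.

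The computation has no genuine obstacle; the only point requiring care is the reindexing step, where one verifies that prepending a single zero row to form $\mtx{\Phi}^0$ and appending a single zero row to form $\mtx{\Phi}_0$ align the boundary terms so that the difference $\scalprod{\mtx{\Phi}^0}{\mtx{X}} - \scalprod{\mtx{\Phi}_0}{\mtx{X}}$ reproduces the finite-difference structure of $\mtx{X}_x$ with no leftover boundary contributions. This is exactly the ``direct algebraic manipulation'' referenced in the statement.
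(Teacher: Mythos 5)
Your proposal is correct: both identities check out (the entrywise expansion, the reindexing $j \mapsto j+1$ that absorbs the zero rows of $\mtx{\Phi}^0$ and $\mtx{\Phi}_0$, and the entrywise verification of $(\mtx{X}^T)_x = \mtx{X}_y^T$ are all valid), and this is precisely the ``direct algebraic manipulation'' the paper invokes when it omits the proof. The only stylistic difference is your reduction of the second identity to the first via the transpose relation rather than redoing the sum, which is a clean economy and fully rigorous since the first identity holds for arbitrary complex $\mtx{X}$, so the conjugation in the inner product causes no issue.
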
 
For a linear operator ${\cal A}: \C^{(N-1) \times N} \rightarrow \C^{m}$ with component measurements ${\cal A}(\mtx{X})_j = \scalprod{\mtx{A}_j}{\mtx{X}}$ we denote by ${\cal A}^0: \C^{N \times N} \rightarrow \C^{m}$ the linear operator with components $[{\cal A}^0(\mtx{X})]_j = \scalprod{(\mtx{A}^0)_j}{\mtx{X}}$.  We define ${\cal A}_0: \C^{N \times N} \rightarrow \C^m$ similarly.

We are now prepared to state the main results of this paper.

\begin{theorem}\label{thm:gen}
Consider $n, m_1, m_2,  s \in \N$, and let $N = 2^n$.  Let ${\cal A}: \C^{(N-1) \times N} \rightarrow \C^{m_1}$ and ${\cal A}': \C^{(N-1) \times N} \rightarrow \C^{m_1}$ 
 be such that the concatenated operator $\hat{\cal A}(\mtx{X}) = \big( {\cal A}(\mtx{X}), {\cal A}'(\mtx{X}) \big)$ has the restricted isometry property of order $5s$ and level $\delta < 1/3$.  Let ${\cal B}: \C^{N \times N} \rightarrow \C^{m_2}$ be such that, composed with the inverse bivariate Haar transform, ${\cal B}{\cal H}^{-1}: \C^{N \times N} \rightarrow \C^{m_2}$ has the restricted isometry property of order $2s$ and level $\delta < 1$.  

Let $m = 4m_1 + m_2$, and consider the linear operator ${\cal M}: \C^{N \times N} \rightarrow \C^{m}$ with components
\begin{equation}
\label{measure:M}
{\cal M}(\mtx{X}) = \Big( {\cal A}^0(\mtx{X}), {\cal A}_0(\mtx{X}), {\cal A'}^0(\mtx{X}^T), {\cal A'}_0(\mtx{X}^T),{\cal B}({\mtx{X}})\Big).
\end{equation}

If $\mtx{X} \in \C^{N \times N}$ with discrete gradient $\mtx{\nabla}\mtx{X}$ is acquired through noisy measurements $\vct{y} = {\cal M}(\mtx{X}) + \vct{\xi}$ with noise level $\| \vct{\xi} \|_2 \leq \varepsilon$, then 
\begin{eqnarray}
\label{tv_stablesignal}
\mtx{\hat{X}} = \argmin_{\mtx{Z}} \| \mtx{Z} \|_{TV}  \quad  \textrm{such that}  \quad \| {\cal M}(\mtx{Z}) - \vct{y} \|_2   \leq \varepsilon
\end{eqnarray}

satisfies 

\begin{equation}
\label{stable1}
\| \mtx{\nabla}\mtx{X}-\mtx{\nabla} \mtx{\hat{X}} \|_2 \lesssim  \frac{ \| \mtx{\nabla}\mtx{X}- (\mtx{\nabla}\mtx{X})_s \|_1}{\sqrt{s}} + \varepsilon,
\end{equation}

\begin{equation}
\label{stable2}
\| \mtx{X}-\mtx{\hat{X}} \|_{TV} \lesssim  \| \mtx{\nabla}\mtx{X} - (\mtx{\nabla} \mtx{X})_s \|_1 + \sqrt{s} \varepsilon,
\end{equation}
and
\begin{equation}
\label{stable3}
\| \mtx{X}-\mtx{\hat{X}} \|_2 \lesssim  \log(N^2/s)\Big( \frac{ \| \mtx{\nabla}\mtx{X} - (\mtx{\nabla}\mtx{X})_s \|_1}{\sqrt{s}}+ \varepsilon\Big).
\end{equation}
\end{theorem}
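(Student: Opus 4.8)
The plan is to set $\mtx{D} = \mtx{X} - \mtx{\hat{X}}$ and argue in two stages: first control the gradient error $\mtx{\nabla}\mtx{D}$ to obtain \eqref{stable1} and \eqref{stable2}, then bootstrap through the Haar domain to the image error for \eqref{stable3}. Because $\mtx{X}$ itself is feasible for the program (its residual is $\|\vct{\xi}\|_2 \le \varepsilon$), optimality of $\mtx{\hat{X}}$ forces $\|\mtx{\hat{X}}\|_{TV} \le \|\mtx{X}\|_{TV}$, and the triangle inequality gives a tube constraint $\|\mathcal{M}(\mtx{D})\|_2 \le 2\varepsilon$; consequently each block $\mathcal{A}^0(\mtx{D}), \mathcal{A}_0(\mtx{D}), \dots, \mathcal{B}(\mtx{D})$ has norm $\lesssim \varepsilon$. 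Writing $\|\mtx{\hat{X}}\|_{TV} = \|\mtx{\nabla}\mtx{X} - \mtx{\nabla}\mtx{D}\|_1$, splitting over the support $S$ of the $s$ largest gradient entries, and rearranging the inequality $\|\mtx{\nabla}\mtx{X} - \mtx{\nabla}\mtx{D}\|_1 \le \|\mtx{\nabla}\mtx{X}\|_1$ yields the cone constraint $\|(\mtx{\nabla}\mtx{D})_{S^c}\|_1 \le \|(\mtx{\nabla}\mtx{D})_{S}\|_1 + 2\|\mtx{\nabla}\mtx{X} - (\mtx{\nabla}\mtx{X})_s\|_1$, i.e.\ \eqref{cc} with $\gamma = 1$ and $\sigma = 2\|\mtx{\nabla}\mtx{X} - (\mtx{\nabla}\mtx{X})_s\|_1$.

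For the tube constraint on the gradient I would use Lemma \ref{padderiv} to convert the padded image measurements into measurements of the directional derivatives, namely $\mathcal{A}(\mtx{D}_x) = \mathcal{A}^0(\mtx{D}) - \mathcal{A}_0(\mtx{D})$ together with the analogous identity for $\mathcal{A}'$ and $\mtx{D}_y^T$, each therefore of norm $\lesssim \varepsilon$. Since the concatenated operator $[\mathcal{A}\ \mathcal{A}']$ applied to the stacked gradient $(\mtx{D}_x, \mtx{D}_y^T)$ equals $\mathcal{A}(\mtx{D}_x) + \mathcal{A}'(\mtx{D}_y^T)$, the gradient obeys a tube constraint $\|[\mathcal{A}\ \mathcal{A}'](\mtx{\nabla}\mtx{D})\|_2 \lesssim \varepsilon$. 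As $[\mathcal{A}\ \mathcal{A}']$ has the RIP of order $5s$, Proposition \ref{cone-tube} applies to $\mtx{\nabla}\mtx{D}$ with $k = s$ and $\gamma = 1$: its conclusion \eqref{eq:h2} is precisely \eqref{stable1}, while \eqref{eq:h1} reads $\|\mtx{\nabla}\mtx{D}\|_1 \lesssim \|\mtx{\nabla}\mtx{X} - (\mtx{\nabla}\mtx{X})_s\|_1 + \sqrt{s}\,\varepsilon$, which is \eqref{stable2} since $\|\mtx{X} - \mtx{\hat{X}}\|_{TV} = \|\mtx{\nabla}\mtx{D}\|_1$.

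The image bound \eqref{stable3} is the crux. Writing $\vct{c} = \mathcal{H}\mtx{D}$ for the Haar coefficients and using orthonormality, $\|\mtx{D}\|_2 = \|\vct{c}\|_2$, and the block $\mathcal{B}(\mtx{D}) = \mathcal{B}\mathcal{H}^{-1}\vct{c}$ furnishes a tube constraint $\|\mathcal{B}\mathcal{H}^{-1}\vct{c}\|_2 \lesssim \varepsilon$. The missing ingredient is a cone constraint on $\vct{c}$, and this is exactly what the strengthened Sobolev inequality of Section \ref{sec:Poincare} (Proposition \ref{cor:cdpx}, built on the Cohen--DeVore--Petrushev--Xu bound) supplies: the bivariate Haar coefficients of an image lie in weak-$\ell_1$ with constant controlled by its total variation, so that $\|\vct{c} - \vct{c}_s\|_1 \lesssim \log(N^2/s)\,\|\mtx{D}\|_{TV}$. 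Thus $\vct{c}$ obeys \eqref{cc} with $\gamma = 1$ and $\sigma \lesssim \log(N^2/s)\,\|\mtx{D}\|_{TV}$; since $\mathcal{B}\mathcal{H}^{-1}$ has the RIP of order $2s$, the sharp form of Proposition \ref{cone-tube} (the $\gamma = 1$ case noted after its statement) yields $\|\mtx{D}\|_2 = \|\vct{c}\|_2 \lesssim \tfrac{\log(N^2/s)}{\sqrt{s}}\|\mtx{D}\|_{TV} + \varepsilon$. Substituting the already-established \eqref{stable2} for $\|\mtx{D}\|_{TV}$ collapses this to \eqref{stable3}.

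The main obstacle is the Sobolev/CDPX step. Everything else is a direct application of Proposition \ref{cone-tube} to two different sparsifying representations, but the passage from gradient control to image control must avoid the factor of $N$ that the ill-conditioned inverse gradient would otherwise incur; the only route is the multiscale estimate that the Haar coefficient tail is governed by $\|\mtx{D}\|_{TV}$. The logarithmic factor $\log(N^2/s)$ in \eqref{stable3} is an honest consequence of summing the weak-$\ell_1$ bound $c_k^\ast \lesssim \|\mtx{D}\|_{TV}/k$ in $\ell_1$ over the roughly $N^2$ coefficients, and it is precisely this step, being two-dimensional in nature, that fails in one dimension and thereby restricts the theorem to images.
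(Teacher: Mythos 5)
Your derivation of \eqref{stable1} and \eqref{stable2} is essentially identical to the paper's: feasibility of $\mtx{X}$ and minimality of $\mtx{\hat{X}}$ give the cone constraint \eqref{cc} on $\mtx{\nabla}\mtx{D}$ with $\gamma=1$ and $\sigma = 2\|\mtx{\nabla}\mtx{X}-(\mtx{\nabla}\mtx{X})_s\|_1$, Lemma~\ref{padderiv} turns the padded measurements into a tube constraint on the directional derivatives, and Proposition~\ref{cone-tube} yields both gradient bounds. (One cosmetic difference: the paper reads $[{\cal A}\,{\cal A}'](\mtx{L})$ as the stacked vector $\big({\cal A}(\mtx{D}_x),\,{\cal A}'(\mtx{D}_y^T)\big)$, whereas you read it as the sum ${\cal A}(\mtx{D}_x)+{\cal A}'(\mtx{D}_y^T)$; under either reading of the RIP hypothesis the argument goes through, so this is immaterial.)

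The gap is in your proof of \eqref{stable3}. You apply Proposition~\ref{cone-tube} as a black box to $\vct{c}={\cal H}\mtx{D}$ with $\gamma=1$, $k=s$, citing the sharp form remarked after its statement. But that proposition, in any of its forms, needs a restricted isometry constant bounded well away from $1$: as stated it needs order $5s$ and $\delta<1/3$, and the cited refinement needs order $2s$ and $\delta_{2s}\approx 0.4931$. Theorem~\ref{thm:gen} only assumes that ${\cal B}{\cal H}^{-1}$ has RIP of order $2s$ and level $\delta<1$, so with, say, $\delta = 0.9$ your invocation is vacuous: the absorption step inside the proof of Proposition~\ref{cone-tube} requires $\sqrt{1-\delta}-\tfrac{1}{2}\sqrt{1+\delta}>0$, which fails once $\delta\geq 3/5$. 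The point the paper exploits in Theorem~\ref{strongsobo} is that here no absorption is needed at all, because Proposition~\ref{cor:cdpx} bounds the tail by an \emph{external} quantity, $\|\vct{c}_{S^c}\|_1 \lesssim \log(N^2/s)\,\|\mtx{D}\|_{TV}$, rather than by $\gamma\|\vct{c}_S\|_1+\sigma$. One then runs the usual block decomposition through the RIP,
\begin{equation*}
\varepsilon \gtrsim \|{\cal B}(\mtx{D})\|_2 \geq (1-\delta)\|\vct{c}_S\|_2 - (1+\delta)\,s^{-1/2}\|\vct{c}_{S^c}\|_1,
\end{equation*}
and simply rearranges, which is valid for every $\delta<1$; combining with the $\ell_2$ tail bound $\|\vct{c}_{S^c}\|_2\lesssim \|\mtx{D}\|_{TV}/\sqrt{s}$ and then substituting \eqref{stable2} gives \eqref{stable3} exactly as in your final step. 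So your proof can be repaired by inlining this "pure-$\sigma$" version of the cone--tube argument instead of citing Proposition~\ref{cone-tube}; as written, it only establishes the theorem under the stronger hypothesis $\delta_{2s}\lesssim 1/2$ on ${\cal B}{\cal H}^{-1}$, not under the stated hypothesis $\delta<1$.
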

%
%
  Our second main result shows that, by allowing for more measurements, one obtains stable and robust recovery guarantees as in Theorem \ref{thm:gen} but with the log factor in \eqref{stable3} removed.  Moreover, the following theorem holds for general sensing matrices having restricted isometry properties.

\begin{theorem}\label{thm2}
Consider $n, m,  s \in \N$, and let $N = 2^n$.  There is an absolute constant $C > 0$ such that if ${\cal A}: \C^{N \times N} \rightarrow \C^{m}$ is such that, composed with the inverse bivariate Haar transform, ${\cal A}{\cal H}^{-1}: \C^{N \times N} \rightarrow \C^{m}$ has the restricted isometry property of  order $C s \log^3(N)$ and level $\delta < 1/3$, then the following holds for any $\mtx{X} \in \C^{N \times N}$.  If noisy measurements $\vct{y} = {\cal A}(\mtx{X}) + \vct{\xi}$ are observed with noise level $\| \vct{\xi} \|_2 \leq \varepsilon$, then 
$$
\mtx{\hat{X}} = \argmin_{\mtx{Z}} \| \mtx{Z} \|_{TV}  \quad  \textrm{such that}  \quad \| {\cal A}(\mtx{Z}) - \vct{y} \|_2   \leq \varepsilon
$$
satisfies 

\begin{equation}
\label{stable_easy}
\| \mtx{X}-\mtx{\hat{X}} \|_2 \lesssim  \frac{ \| \mtx{\nabla}\mtx{X} - (\mtx{\nabla}\mtx{X})_s \|_1}{\sqrt{s}}+ \varepsilon.
\end{equation}
\end{theorem}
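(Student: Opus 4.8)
The plan is to run the whole argument in the bivariate Haar domain and to exploit the restricted isometry property of ${\cal A}{\cal H}^{-1}$ at the \emph{enlarged} sparsity level $s' = Cs\log^3(N)$, which is precisely what absorbs the single logarithmic factor incurred in Theorem~\ref{thm:gen}. Write $\mtx{D} = \mtx{X} - \mtx{\hat{X}}$ for the reconstruction error and $\mtx{d} = {\cal H}(\mtx{D})$ for its bivariate Haar coefficients. Since ${\cal H}$ is orthonormal we have $\|\mtx{d}\|_2 = \|\mtx{D}\|_2$, and since $\mtx{X}$ and $\mtx{\hat{X}}$ are both feasible for the program, the triangle inequality gives the tube constraint $\|{\cal A}{\cal H}^{-1}(\mtx{d})\|_2 = \|{\cal A}(\mtx{D})\|_2 \le 2\varepsilon$.

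First I would extract a cone constraint on the gradient from the optimality of $\mtx{\hat{X}}$. Because $\|\mtx{\hat{X}}\|_{TV} \le \|\mtx{X}\|_{TV}$, splitting $\mtx{\nabla}\mtx{X}$ and $\mtx{\nabla}\mtx{\hat{X}}$ over the support $T$ of the $s$ largest entries of $\mtx{\nabla}\mtx{X}$ and using the triangle inequality yields the familiar bound
$$
\|(\mtx{\nabla}\mtx{D})_{T^c}\|_1 \le \|(\mtx{\nabla}\mtx{D})_T\|_1 + 2\|\mtx{\nabla}\mtx{X} - (\mtx{\nabla}\mtx{X})_s\|_1 .
$$
Estimating $\|(\mtx{\nabla}\mtx{D})_T\|_1 \le \sqrt{s}\,\|\mtx{\nabla}\mtx{D}\|_2$ and invoking the operator norm bound $\|\mtx{\nabla}\mtx{D}\|_2 \le 4\|\mtx{D}\|_2$ then gives the key intermediate estimate
$$
\|\mtx{\nabla}\mtx{D}\|_1 \lesssim \sqrt{s}\,\|\mtx{D}\|_2 + \|\mtx{\nabla}\mtx{X} - (\mtx{\nabla}\mtx{X})_s\|_1 ,
$$
in which the unknown quantity $\|\mtx{D}\|_2$ reappears on the right. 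Unlike in Theorem~\ref{thm:gen}, no gradient-measurement tube constraint is available here, so this self-referential inequality is the only control we have on $\|\mtx{\nabla}\mtx{D}\|_1$.

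Next I would split $\mtx{d}$ into its $s'$ largest entries $\mtx{d}_S$ and the tail $\mtx{d}_{S^c}$ and bound the two pieces using Proposition~\ref{cor:cdpx}, which controls the decay of an image's Haar coefficients by its total variation. The tail is handled by the logarithm-free $\ell_2$ form of this estimate, $\|\mtx{d}_{S^c}\|_2 \lesssim \|\mtx{\nabla}\mtx{D}\|_1/\sqrt{s'}$, while the head is handled by the RIP of ${\cal A}{\cal H}^{-1}$ together with the tube constraint in the standard way (peeling $\mtx{d}_{S^c}$ into blocks of size $s'$ and summing), giving $\|\mtx{d}_S\|_2 \lesssim \varepsilon + \|\mtx{d}_{S^c}\|_1/\sqrt{s'}$, where the $\ell_1$ tail now enters and the same estimate contributes $\|\mtx{d}_{S^c}\|_1 \lesssim \log(N^2/s')\,\|\mtx{\nabla}\mtx{D}\|_1$. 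Adding the two pieces, substituting the intermediate gradient estimate, and using $\log(N^2/s') \ge 1$, I arrive at
$$
\|\mtx{D}\|_2 \lesssim \varepsilon + \frac{\log(N^2/s')}{\sqrt{s'}}\Big( \sqrt{s}\,\|\mtx{D}\|_2 + \|\mtx{\nabla}\mtx{X} - (\mtx{\nabla}\mtx{X})_s\|_1 \Big) .
$$
One can equivalently package the head-and-tail step by applying Proposition~\ref{cone-tube} to $\mtx{d}$ with $\gamma = 1$ and $k \asymp s'$, reading the right-hand side above off of \eqref{eq:h2}.

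The final step is the absorption. With $s' = Cs\log^3(N)$ the coefficient multiplying $\|\mtx{D}\|_2$ is $\lesssim \log(N^2/s')/(\sqrt{C}\log^{3/2}(N)) \lesssim 1/(\sqrt{C}\sqrt{\log N})$, which is at most $1/2$ once $C$ is a large enough absolute constant, while the coefficient multiplying the tail term is $\lesssim 1/\sqrt{s}$; moving the $\|\mtx{D}\|_2$ term to the left then produces exactly $\|\mtx{X} - \mtx{\hat{X}}\|_2 \lesssim \|\mtx{\nabla}\mtx{X} - (\mtx{\nabla}\mtx{X})_s\|_1/\sqrt{s} + \varepsilon$. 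I expect the crux to be this self-referential structure: since there is no direct tube constraint on the gradient, $\|\mtx{D}\|_2$ cannot be eliminated except through the enlarged RIP order, and the argument closes only because $s' \gg s$ simultaneously makes the $\log(N^2/s')/\sqrt{s'}$ prefactor (from the $\ell_1$ tail) and the $\sqrt{s}/\sqrt{s'}$ prefactor (from the $\ell_2$ tail) small. Making the powers of $\log N$ cancel—so that the $\log^3$ in the RIP order is exactly enough to kill the lone logarithm of Theorem~\ref{thm:gen}—hinges on the sharp logarithm-free $\ell_2$ version of the Cohen--DeVore--Petrushev--Xu inequality for the tail, which is the step I would verify most carefully.
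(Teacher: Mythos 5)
Your proposal is correct, but it takes a genuinely different route from the paper's proof. The paper converts the gradient-domain cone constraint into a cone constraint on the Haar coefficients themselves, using two structural facts about the bivariate Haar system: Lemma~\ref{edge} (only $O(\log N)$ wavelets are non-constant across any pixel edge, so the set $\Omega$ of wavelets interacting with the support $S$ has size $k \asymp s\log N$) and Lemma~\ref{l1haar} ($\| \nabla \vct{h}_{j,k}^e \|_1 \le 8$, which yields $\|(\nabla\mtx{D})_S\|_1 \le 8\sum_{j\in\Omega}|c_{(j)}|$). This produces a Haar-domain cone constraint with head size $k \asymp s\log N$ and slope $\gamma \asymp \log(N^2/s)$, and a single application of Proposition~\ref{cone-tube} finishes; the $\log^3(N)$ in the RIP order is exactly the $5k\gamma^2$ that proposition demands. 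You avoid the wavelet--edge combinatorics entirely: your substitutes are the operator-norm bound $\|\nabla\mtx{D}\|_2 \le 4\|\mtx{D}\|_2$ (giving the self-referential estimate $\|\nabla\mtx{D}\|_1 \lesssim \sqrt{s}\,\|\mtx{D}\|_2 + \|\nabla\mtx{X}-(\nabla\mtx{X})_s\|_1$), a re-run of the strong Sobolev inequality (Theorem~\ref{strongsobo}) at the enlarged level $s' \asymp s\log^3 N$, and an absorption step. Both arguments are valid; yours is more modular, reusing Theorem~\ref{strongsobo} essentially as a black box together with Proposition~\ref{cor:cdpx}, and it is in fact slightly stronger: the absorption only requires $\sqrt{s}\,\log(N^2/s')/\sqrt{s'}$ to be a small constant, which already holds for $s' \asymp s\log^2 N$ with a large enough absolute constant, so your route gets by with RIP order $C s\log^2(N)$ rather than $C s \log^3(N)$. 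One small correction to your closing diagnosis: the logarithm-free $\ell_2$ tail estimate is not where the log-counting is decided, since the head bound $\|\mtx{d}_S\|_2 \lesssim \varepsilon + \|\mtx{d}_{S^c}\|_1/\sqrt{s'}$ already carries the factor $\log(N^2/s')$ through the $\ell_1$ tail, so both pieces contribute at the same order $\log(N^2/s')\,\|\nabla\mtx{D}\|_1/\sqrt{s'}$; what actually closes the argument is that $\sqrt{s}/\sqrt{s'}$ beats $\log(N^2/s')$. Also, for completeness you should dispose of the degenerate regime $s' \gtrsim N^2$ (where the tail is empty and the RIP applied to $\mtx{d}$ directly gives $\|\mtx{D}\|_2 \lesssim \varepsilon$), and note that the block-peeling needs the RIP at order $2s'$, available after adjusting the absolute constant.
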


{\bf Remarks.}

{\bf 1.}  In light of~\eqref{matrix:optbound}, the gradient error guarantees \eqref{stable1} and \eqref{stable2} provided by Theorem~\ref{thm:gen} are optimal, and the image error guarantee \eqref{stable3} is optimal up to a logarithmic factor, which we conjecture to be an artifact of the proof.  We also believe that the $4m_1$ measurements derived from ${\cal A}$ in Theorem \ref{thm1}, which are only used to prove stable gradient recovery, are not necessary and can be removed.  Theorem~\ref{thm2} provides optimal error recovery guarantees, at the expense of an additional factor of $C_0\log^3(N)$ measurements.

{\bf 2.}  The RIP requirements in Theorem~\ref{thm:gen} mean that the linear operators \\
${\cal A}^0, {\cal A}_0, {\cal A'}^0, {\cal A'}_0,$ and ${\cal B}$, can be generated using standard RIP matrix ensembles which are incoherent with the Haar wavelet basis.  For example, these measurements can be generated from a subgaussian random matrix $\mtx{\Phi} \in \R^{m \times N^2}$ with $m \gtrsim s\log(N^2/s)$.  Such constructions give rise to Theorem~\ref{introThm}.  Alternatively, these measurements could be generated from a partial Fourier matrix $\mtx{F}_{\Omega} \in \C^{m \times N^2}$ with $m \gtrsim s\log^5(N)$ and randomized column signs \cite{krahmer2010new}. 
We note that without randomized column signs, the partial Fourier matrix with uniformly subsampled rows is not incoherent with wavelet bases.  As shown in \cite{kw12}, the partial Fourier matrix with rows subsampled according to an appropriate power law density is incoherent with the Haar wavelet basis and can be applied in Theorems \ref{thm:gen} and \ref{thm2}. 

{\bf 3.} We have not tried to optimize the dependence of constants on the values of the restricted isometry parameters in the theorems.  Further refinements may yield improvements and tighter bounds throughout.

{\bf 4.} Theorems \ref{thm:gen} and \ref{thm2} require the image side-length to be a power of $2$, $N = 2^n$.
This is not actually a restriction, as an image of arbitrary side-length $N$ can be reflected horizontally and vertically to produce an at most $2N \times 2N$ image with the same total variation up to a factor of $4$.

The remainder of the article is dedicated to the proofs of Theorem \ref{thm:gen} and Theorem \ref{thm2}.
The proof of Theorem~\ref{thm:gen} is two-part:  we first prove the bounds \eqref{stable1} and \eqref{stable2} concerning stable recovery of the discrete gradient.  We then prove a strengthened  Sobolev inequality for images in the null space of an RIP matrix, and stable image recovery follows.  
The proof of Theorem \ref{thm2} is similar but more direct, and does not use a Sobolev inequality explicitly.

\section{Stable gradient recovery for discrete images}\label{sec:grad}
In this section we prove statements \eqref{stable1} and \eqref{stable2} from Theorem \ref{thm:gen}, showing that total variation minimization recovers the gradient image robustly.  

\subsection{Proof of stable gradient recovery, bounds \eqref{stable1} and \eqref{stable2}}\label{stabgrad}

Since the operator $\hat{{\cal A}}(\mtx{X}) = ( {\cal A}(\mtx{X}), {\cal A}'(\mtx{X}) )$ has the RIP, in light of Proposition~\ref{cone-tube}, if suffices to show that the discrete gradient $ \mtx{\nabla}( \mtx{X-\hat{X}})$, regarded as a vector in $\C^{N^2}$, satisfies the tube and cone constraints.

Let $\mtx{D} = \mtx{X} - \mtx{\hat{X}}$, and set $\mtx{L} = (\mtx{D}_x, \mtx{D}_y^T)$.  For convenience, let $P$ denote the map which takes the index of a non-zero entry in $\mtx{\nabla}\mtx{D}$ to its corresponding index in $\mtx{L}$.  Observe that by definition of the gradient, $\mtx{L}$ has the same norm as $\mtx{\nabla}\mtx{D}$.  That is, $\|\mtx{L}\|_2 = \|\mtx{\nabla}\mtx{D}\|_2$ and $\|\mtx{L}\|_1 = \|\mtx{\nabla}\mtx{D}\|_1$.  It thus now suffices to show that the matrix $\mtx{L}$ satisfies the tube and cone constraint.

Let $\mtx{A_1}, \mtx{A_2}, \ldots \mtx{A_{m_1}}, \mtx{A_1}', \mtx{A_2}', \ldots \mtx{A_{m_1}}'$ be such that
$$
{\cal A}(\mtx{Z})_j = \langle \mtx{A_j}, \mtx{Z}\rangle, \quad {\cal A'}(\mtx{Z})_j = \langle \mtx{A_j}', \mtx{Z}\rangle
$$ 

\begin{description}
\item[Cone Constraint.]  Let $S$ denote the support of the largest $s$ entries of $\mtx{\nabla}\mtx{X}$.  By minimality of $\mtx{\hat{X}} = \mtx{X} - \mtx{D}$ and feasibility of $\mtx{X}$,

\begin{align*}
\| (\mtx{\nabla} \mtx{X})_S \|_1 - \| (\mtx{\nabla}\mtx{D})_S\|_1 - \| (\mtx{\nabla}\mtx{X})_{S^c}\|_1 &+ \|(\mtx{\nabla}\mtx{D})_{S^c}\|_1 \\
&\leq \| (\mtx{\nabla}\mtx{X})_S - (\mtx{\nabla}\mtx{D})_S\|_1 + \| (\mtx{\nabla}\mtx{X})_{S^c} - (\mtx{\nabla}\mtx{D})_{S^c}\|_1\\
&= \|\mtx{\nabla}\mtx{\hat{X}}\|_1\\
&\leq \|\mtx{\nabla}\mtx{{X}}\|_1\\
&= \| (\mtx{\nabla}\mtx{{X}})_S\|_1 + \| (\mtx{\nabla}\mtx{{X}})_{S^c}\|_1
\end{align*}
Rearranging, this yields

$$
\| (\mtx{\nabla}\mtx{D})_{S^c} \|_1 \leq \| (\mtx{\nabla}\mtx{D})_S \|_1 + 2\| \mtx{\nabla}\mtx{X} - (\mtx{\nabla}\mtx{X})_s \|_1.
$$

Since $\mtx{L}$ has the same non-zero entries as $\mtx{\nabla}\mtx{D}$, this implies that $\mtx{L}$ satisfies the cone constraint

$$
\| \mtx{L}_{P(S)^c} \|_1 \leq \| (\mtx{\nabla}\mtx{D})_{P(S)} \|_1 + 2\| \mtx{\nabla}\mtx{X} - (\mtx{\nabla}\mtx{X})_s \|_1.
$$

By definition of $P$, note that $|P(S)| \leq |S| = s$.

\item[Tube constraint.] First note that $\mtx{D}$ satisfies a tube constraint,
\begin{eqnarray}
\| {\cal M}(\mtx{D}) \|_2^2 
&\leq& 2\| {\cal M}(\mtx{X}) - \vct{y} \|_2^2 + 2\| {\cal M}(\mtx{\hat{X}}) - \vct{y} \|_2^2 \nonumber \\
&\leq& 4 \varepsilon^2 \nonumber
\end{eqnarray}

Now by Lemma \ref{padderiv}, 
\begin{eqnarray}
|\scalprod{\mtx{A}_j}{\mtx{D}_x}|^2 &=& | \scalprod{[\mtx{A_j}]^0}{\mtx{D}} - \scalprod{[\mtx{A_j}]_0}{\mtx{D}} |^2 \nonumber\\
&\leq&  2| \scalprod{[\mtx{A_j}]^0}{\mtx{D}} |^2 + 2| \scalprod{[\mtx{A_j}]_0}{\mtx{D}} |^2
\end{eqnarray}
and
\begin{eqnarray}
|\scalprod{\mtx{A}_j'}{\mtx{D}_y^T}|^2 &=& \left| \scalprod{[\mtx{A_j'}]^0}{\mtx{D}^T} - \scalprod{[\mtx{A_j'}]_0}{\mtx{D}^T} \right|^2 \nonumber\\
&\leq&  2\left| \scalprod{[\mtx{A_j'}]^0}{\mtx{D}^T}\right|^2 + 2\left|\scalprod{[\mtx{A_j'}]_0}{\mtx{D}^T} \right|^2
\end{eqnarray}

Thus $\mtx{L}$ also satisfies a tube-constraint:
\begin{eqnarray}
\| [{\cal A}\; {\cal A}'](\mtx{L}) \|_2^2 &=& \sum_{j=1}^m |\scalprod{\mtx{A}_j}{\mtx{D}_x}|^2 +  |\scalprod{\mtx{A}_j'}{\mtx{D}_y^T}|^2 \nonumber \\
 &\leq& 2 \| {\cal M}(\mtx{D}) \|_2^2 
 \nonumber \\
 &\leq& 8\varepsilon^2.
\end{eqnarray}

\end{description}
Proposition~\ref{cone-tube} then completes the proof.

\section{A strengthened Sobolev inequality for incoherent null spaces}\label{sec:Poincare}

As a corollary of the classical Sobolev embedding of the space of functions of bounded variation $BV(\R^2)$ into $L_2(\R^2)$ \cite{bv2000}, the Frobenius norm of a zero-mean image is bounded by its total variation semi-norm. 

\begin{proposition}[Sobolev inequality for images]
\label{corav}
Let $\mtx{X} \in \C^{N \times N}$ be a mean-zero image.
Then
\begin{equation}
\label{SobStand}
\| \mtx{X} \|_2 \leq \| \mtx{X} \|_{TV}
\end{equation}
\end{proposition}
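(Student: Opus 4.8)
The plan is to transport the statement to the continuum via the isometry \eqref{isometry} and then prove a Sobolev--Poincar\'e inequality for mean-zero $BV$ functions on the unit square $Q$. First I would identify $\mtx{X}$ with the piecewise-constant function $f \in \Sigma_N \subset L_2(Q)$ determined by $f \equiv N X_{j,k}$ on the cell $[\tfrac{j-1}{N},\tfrac{j}{N}) \times [\tfrac{k-1}{N},\tfrac{k}{N})$. A direct computation shows that this identification is an isometry, $\|f\|_{L_2(Q)} = \|\mtx{X}\|_2$, that the mean-zero hypothesis becomes $\int_Q f = 0$, and that the distributional gradient of $f$ is a vector measure supported on the interior cell edges whose total variation reproduces exactly the anisotropic discrete total variation: writing $|D_1 f|(Q)$ and $|D_2 f|(Q)$ for the horizontal and vertical parts, one has $|D_1 f|(Q) = \sum_{j,k} |(\mtx{X}_x)_{j,k}|$ and $|D_2 f|(Q) = \sum_{j,k}|(\mtx{X}_y)_{j,k}|$, so that $|Df|(Q) = \|\mtx{X}\|_{TV}$. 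The claim is thus equivalent to $\|f\|_{L_2(Q)} \le |Df|(Q)$.

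To prove this continuum inequality with constant $1$, I would avoid extending $f$ to $\R^2$ and instead work with its one-dimensional marginals $u(y) = \int_0^1 f(x,y)\,dx$ and $v(x) = \int_0^1 f(x,y)\,dy$, both of which inherit mean-zero from $f$. Slicing $f$ in each coordinate gives the pointwise bounds $|f(x,y) - u(y)| \le g_1(y)$ and $|f(x,y) - v(x)| \le g_2(x)$, where $g_1(y) = \int_0^1 |\partial_1 f(t,y)|\,dt$ and $g_2(x) = \int_0^1 |\partial_2 f(x,s)|\,ds$ satisfy $\int_0^1 g_1 = |D_1 f|(Q)$ and $\int_0^1 g_2 = |D_2 f|(Q)$. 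A one-dimensional mean-zero Poincar\'e estimate (from the fundamental theorem of calculus, using $\|u\|_\infty \le \|u'\|_{L_1}$ for mean-zero $u$ on $[0,1]$) bounds the marginals by the transverse variation, $\|u\|_{L_1} \le |D_2 f|(Q)$ and $\|v\|_{L_1} \le |D_1 f|(Q)$. The key two-dimensional step is then the product trick: multiplying $|f| \le |u(y)| + g_1(y)$ by $|f| \le |v(x)| + g_2(x)$ and integrating over $Q$ factorizes the right-hand side, giving
\[
\|f\|_{L_2(Q)}^2 \le \Big(\|u\|_{L_1} + |D_1 f|(Q)\Big)\Big(\|v\|_{L_1} + |D_2 f|(Q)\Big) \le \big(|D_1 f|(Q) + |D_2 f|(Q)\big)^2 = |Df|(Q)^2 .
\]
Taking square roots finishes the argument.

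The step I expect to be the main obstacle is precisely the passage to the bounded domain. The classical embedding $BV(\R^2) \hookrightarrow L_2(\R^2)$ relies on decay at infinity, and the naive remedy of extending $f$ by zero fails here: it introduces a boundary term $\int_{\partial Q} |f|$ that is not controlled by $|Df|(Q)$. This is where the mean-zero hypothesis is indispensable (indeed the inequality is false for nonzero constants, which have zero variation), and the marginal/Poincar\'e device above is exactly what converts mean-zero into the missing control while keeping the constant equal to $1$. The remaining technical care lies in justifying the slicing identities for the merely $BV$ (not Sobolev) function $f$; since $f$ is piecewise constant this can be handled either by appeal to the coarea/slicing theory for $BV$ functions or, if one prefers, by carrying out the entire marginal argument directly on the discrete image $\mtx{X}$.
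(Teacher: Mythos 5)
Your proof is correct, and it establishes the proposition as literally stated --- which is more than the paper's own self-contained argument does. The paper handles the mean-zero case only by citing the classical embedding of $BV(\R^2)$ into $L_2(\R^2)$; the direct proof it gives in the appendix is for a different sufficient condition, namely that the first row and first column of $\mtx{X}$ vanish. There the anchoring is by telescoping from the zero boundary: $|X_{i,j}| \le f(j) := \sum_{\ell} |X_{\ell+1,j}-X_{\ell,j}|$ and $|X_{i,j}| \le g(i) := \sum_{\ell} |X_{i,\ell+1}-X_{i,\ell}|$, followed by exactly the product-and-factorize step you use ($|X_{i,j}|^2 \le f(j)\,g(i)$, sum over pixels, AM--GM), yielding $\|\mtx{X}\|_2 \le \tfrac{1}{2}\|\mtx{X}\|_{TV}$ under that hypothesis. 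So the two arguments share the same two-dimensional mechanism --- bound each pixel by a quantity depending only on its row and one depending only on its column, multiply, and let Fubini factorize the double sum into a product of two one-dimensional sums --- but they differ in how the normalization is converted into pointwise control: the paper telescopes from assumed zeros, whereas you compare $f$ with its marginals and invoke the one-dimensional mean-zero Poincar\'e bound $\|u\|_\infty \le \mathrm{Var}(u)$, which is exactly what makes the mean-zero hypothesis (the one actually in the statement, and which neither implies nor is implied by the paper's) usable. What each buys: the paper's route is shorter and gives constant $\tfrac12$ under its boundary hypothesis; yours is self-contained for mean-zero images with constant $1$, and your side remarks are apt --- the inequality does fail for nonzero constants, and zero-extension does fail because of the uncontrolled boundary term $\int_{\partial Q}|f|$. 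The only step needing care is the BV slicing identities (a.e.\ slice has bounded variation and $\int_0^1 g_1 = |D_1 f|(Q)$), but for piecewise-constant $f \in \Sigma_N$ these reduce to finite sums, and, as you note, the whole marginal argument can equally be run directly on the discrete image, so there is no gap.
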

This inequality also holds if instead of being mean-zero, $\mtx{X} \in \C^{N \times N}$ contains some zero-valued pixel.  In the appendix, we give a direct proof of the Sobolev inequality \eqref{SobStand} in the case that all pixels in the first column and first row of $\mtx{X} \in \C^{N \times N}$ are zero-valued, $X_{1,j} = X_{j,1} = 0.$

The Sobolev inequality can be used to derive image error guarantees given gradient error guarantees.  However, we will be able to derive even sharper estimates by appealing to a remarkable theorem from \cite{cdpx} which says that the bivariate Haar coefficient vector of a zero-mean function $f \in BV(Q)$ on the unit square $Q = [0,1)^2$ is in weak $\ell_1$, and its weak $\ell_1$-norm is proportional to its bounded-variation semi-norm.  The following proposition is a corollary of Theorem $8.1$ of \cite{cdpx}, and the derivation is given in the appendix.   
 \begin{proposition}
\label{cor:cdpx}
Suppose $\mtx{X} \in \C^{N \times N}$ is mean-zero, and let $c_{(k)}(\mtx{X})$ be the entry of the bivariate Haar transform ${\cal H}(\mtx{X})$ having $k$th largest magnitude. Then for all $k \geq 1$, 
$$
| c_{(k)}(\mtx{X}) | \leq C_1 \frac{ \| \mtx{X} \|_{TV}}{k}
$$
where $C_1 = 36(480 \sqrt{5} + 168 \sqrt{3})$. 
\end{proposition}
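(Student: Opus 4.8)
The plan is to transfer the statement from the discrete image to the continuous setting, where the deep theorem of Cohen, DeVore, Petrushev and Xu \cite{cdpx} applies, and then to convert the resulting weak-$\ell_1$ estimate into the claimed pointwise decay. Concretely, I would first embed $\mtx{X}$ as the piecewise-constant function $f \in \Sigma_N \subset L_2(Q)$ given by the isometry \eqref{isometry}, namely $f \equiv N X_{j,k}$ on the grid cell $[\tfrac{j-1}{N},\tfrac{j}{N}) \times [\tfrac{k-1}{N},\tfrac{k}{N})$. The theorem of \cite{cdpx} states that the bivariate Haar (wavelet) coefficient sequence of any $f \in BV(Q)$ lies in weak $\ell_1$, with quasi-norm bounded by a constant multiple of the bounded-variation seminorm $|f|_{BV(Q)}$. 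Granting this input, the two facts I must verify are (i) that $|f|_{BV(Q)}$ is controlled by $\|\mtx{X}\|_{TV}$, and (ii) that the continuous bivariate Haar coefficients of $f$ coincide with the entries of the discrete transform $\CH(\mtx{X})$.

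For (i), the distributional gradient of the piecewise-constant $f$ is a vector measure supported on the interior edges of the grid, whose total mass is a sum over edges of (jump)$\,\times\,$(edge length). Across the vertical edge separating two horizontally-adjacent cells the jump is $N|X_{j+1,k}-X_{j,k}|$ while the edge has length $1/N$, so the factor of $N$ cancels; summing the vertical and horizontal edge contributions gives $|f|_{BV(Q)} = \|\mtx{X}_x\|_1 + \|\mtx{X}_y\|_1 = \|\mtx{\nabla}\mtx{X}\|_1 = \|\mtx{X}\|_{TV}$, using the anisotropic convention of \eqref{eq:TV} (the isotropic version differs by at most $\sqrt{2}$ and is absorbed into $C$). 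For (ii), because the identification \eqref{isometry} is an inner-product-preserving bijection and the discrete basis image $\vct{h}^e_{j,k}$ is by definition the image corresponding to $H^e_{j,k}$, one has $\scalprod{\mtx{X}}{\vct{h}^e_{j,k}} = \langle f, H^e_{j,k} \rangle_{L_2(Q)}$ with no stray normalization constants. Moreover $f$ is constant on each grid cell, hence orthogonal to every finer-scale wavelet $H^e_{j,k}$ with $j \geq n$ (each such wavelet is supported inside a single cell and has mean zero there); thus the only nonzero continuous coefficients are exactly the $j \leq n-1$ entries recorded by $\CH(\mtx{X})$.

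Combining (i) and (ii), the wavelet coefficient sequence $\{\scalprod{\mtx{X}}{\vct{h}^e_{j,k}}\}$ inherits the weak-$\ell_1$ bound of \cite{cdpx} with quasi-norm $M \lesssim \|\mtx{X}\|_{TV}$. I would then convert this into the stated decay by the standard relation between a weak-$\ell_1$ bound and the decreasing rearrangement: if more than $k$ coefficients exceed $\lambda$, then $\lambda k \leq M$, so the $k$th-largest coefficient satisfies $|c_{(k)}(\mtx{X})| \leq M/k \lesssim \|\mtx{X}\|_{TV}/k$. The genuine content, and the only hard step, is the cited embedding theorem of \cite{cdpx}; on our side the work is purely the bookkeeping of (i) and (ii). The one subtlety is the single mean (DC) coefficient, which is not a wavelet coefficient and is not controlled by $\|\mtx{X}\|_{TV}$: since it occupies only one rank, including it in the ordering shifts each rank by at most one position, which weakens the bound by at most a factor $2$ for $k \geq 2$ (as $1/(k-1) \leq 2/k$), absorbed into $C$, and this is harmless for the compressibility estimates in which the proposition is subsequently used.
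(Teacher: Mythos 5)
Your proposal is correct and takes essentially the same route as the paper: identify $\mtx{X}$ with the piecewise-constant function $f_{\mtx{X}}$ of \eqref{isometry}, invoke Theorem~8.1 of \cite{cdpx} (which the paper already quotes in the rearranged form $|c_{(k)}(f)|\leq C_1 |f|_{BV}/k$, so your weak-$\ell_1$-to-decay conversion is built into the citation), and control $|f_{\mtx{X}}|_{BV}$ by $\|\mtx{X}\|_{TV}$ --- the paper does this by evaluating the difference quotients $\Delta_{h\vct{e}_j}$ directly, which is the same edge-jump bookkeeping as your distributional-gradient computation, and your verification that fine-scale wavelets ($j\geq n$) have vanishing coefficients is the content behind the paper's one-line assertion that the discrete and continuous Haar coefficients coincide. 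Your closing remark about the mean (DC) coefficient is a genuine refinement rather than a redundancy: taken literally, with ${\cal H}(\mtx{X})$ including the constant-function entry, the stated bound fails at $k=1$ (e.g.\ for a constant image, $\|\mtx{X}\|_{TV}=0$ while the DC entry is $N$), a point the paper's proof passes over silently, and your rank-shift fix (losing a factor at most $2$ for $k\geq 2$) is exactly what the proposition's applications in Theorems~\ref{strongsobo} and \ref{thm2} require, since those arguments only use tail sums over indices $j\geq s+1\geq 2$.
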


Proposition \ref{cor:cdpx} bounds the decay of the Haar wavelet coefficients by the image total variation semi-norm.  At the same time, vectors lying in the null space of a matrix with the restricted isometry property must be sufficiently \emph{flat}, with the $\ell_2$-energy in their largest $s$ components in magnitude bounded by the $\ell_1$ norm of the remaining components (the so-called \emph{null-space property}) \cite{codade09}.  As a result, the $\ell_2$ norm of the bivariate Haar transform  of  $\mtx{D} = \mtx{X}-\mtx{\hat{X}}$, and thus the $\ell_2$ norm of $\mtx{D}$ itself, must be sufficiently small.  In fact, $\mtx{D}$ satisfies a Sobolev inequality that is stronger than the standard inequality \eqref{SobStand} by a factor of $\log(N^2/s)/\sqrt{s}$. 

\begin{theorem}[Strong Sobolev inequality]
\label{strongsobo}
Consider $s, m, n \in \N$, and let $N = 2^n$.  Let ${\cal B}: \C^{N \times N} \rightarrow \C^m$ be a linear map which, composed with the inverse bivariate Haar transform ${\cal B} {\cal H}^{-1}: \C^{N \times N} \rightarrow \C^{m}$, has the restricted isometry property of order $2s$ and level $\delta < 1$.  Suppose $\mtx{X} \in \C^{N \times N}$ satisfies $\| {\cal B}(\mtx{X}) \|_2 \leq \varepsilon$. Then
\begin{equation}
\label{sobstrong}
\| \mtx{X} \|_2 \leq C' s^{-1/2} \log(N^2/s) \| \mtx{X} \|_{TV}  +C'' \varepsilon
\end{equation}
where $C' = \frac{2C_1}{(1-\delta)}$ and $C_1$ the constant from Proposition \ref{cor:cdpx}, and $C'' = \frac{1}{1-\delta}$.
In particular, if ${\bf X} \in \C^{N \times N}$ lies in the null space of ${\cal B}$, then
$$
\| \mtx{X} \|_2 \lesssim s^{-1/2} \log(N^2/s) \| \mtx{X} \|_{TV}  
$$
\end{theorem}

\begin{proof}
Let $\vct{c} = {\cal H}({\mtx X}) \in \C^{N^2}$ be the bivariate Haar wavelet transform of $\mtx{X}$, regarded as a vector in $\mtx{C}^{N^2}$.  First decompose $\vct{c} = \vct{c}_0 + \vct{c}_1$, where $\vct{c}_0$ is one-sparse and consists only of the first Haar coefficient $c_0 = \scalprod{ \mtx{X}}{\vct{h}^0 } = \frac{1}{N} \sum_{j,k} X_{j,k}$.   Write $\mtx{X} = \mtx{X}_0 + \mtx{X}_1$, where $\mtx{X}_0 \equiv \frac{c_0}{N}$ is constant and $\mtx{X}_1$ is mean-zero.   Note that $(0, \vct{c}_1) = {\cal H}(\mtx{X}_1)$ and $(\vct{c}_0, {\bf 0}) = {\cal H}(\mtx{X}_0)$.
Denote the $j$th-largest magnitude entry of $\vct{c}_1$ by $c_{(j)}$, and write $\vct{c}_1 = \vct{c}_{\Omega} + \vct{c}_{\Omega^c}$ where $\vct{c}_{\Omega}$ is the $s$-sparse vector of $s$ largest-magnitude entries of $\vct{c}_1$.  Further decompose $\vct{c}_{\Omega^c} = \vct{c}_{\Omega^c}^{1} + \vct{c}_{\Omega^c}^{2} + \dots + \vct{c}_{\Omega^c}^{r}$ where $r = \lfloor \frac{N^2}{s} \rfloor$, and $\vct{c}_{\Omega^c}^{1}$ is the $s$-sparse image pointing to the $s$ largest-magnitude entries remaining in $\vct{c}_{\Omega^c}$, and $\vct{c}_{\Omega^c}^{2}$ is the $s$-sparse image pointing to the $s$ largest-magnitude entries remaining in $\vct{c}_{\Omega^c} - \vct{c}_{\Omega^c}^{1}$, and so on.  

By Proposition \ref{cor:cdpx} we know that $| c_{(j)} | \leq C_1\| \mtx{X} \|_{TV} / j$.  Then
\begin{eqnarray}
\label{tail1}
\| \vct{c}_{\Omega^c} \|_1 &=& \sum_{j = s+1}^{N^2} |c_{(j)}| \nonumber \\
&\leq& C_1\| \mtx{X} \|_{TV} \sum_{j = s+1}^{N^2} \frac{1}{j} \nonumber \\
&\leq&C_1\| \mtx{X} \|_{TV} \log(N^2/s).
\end{eqnarray}
We can similarly bound the $\ell_2$-norm:
\begin{eqnarray}
\label{tail2}
\| \vct{c}_{\Omega^c} \|_2^2 &=& \sum_{j=s+1}^{N^2} |c_{(j)} |^2 \nonumber \\
 &\leq& C_1^2 \| \mtx{X} \|_{TV}^2 \sum_{j = s+1}^{N^2} \frac{1}{j^2} \nonumber \\
 &\leq& C_1^2 \| \mtx{X} \|_{TV}^2/s,
\end{eqnarray}
obtaining $\| \vct{c}_{\Omega^c} \|_2 \leq C_1 s^{-1/2} \| \mtx{X} \|_{TV}$.

We now use the restricted isometry property for ${\cal B} {\cal H}^{-1}$ and assumed tube constraint $\| {\cal B}(\mtx{X}) \|_2 \leq \varepsilon$ to obtain
\begin{eqnarray}
\varepsilon &\geq& \| {\cal B}(\mtx{X}) \|_2 \nonumber \\
&=& \| {\cal B}{\cal H}^{-1} (\vct{c}_0 + \vct{c}_{\Omega} + \vct{c}_{\Omega^c}) \|_2 \nonumber \\
&\geq& \| {\cal B} {\cal H}^{-1} (\vct{c}_0 + \vct{c}_{\Omega} + \vct{c}^1_{{\Omega}^c}) \|_2 - \sum_{j=2}^r  \| {\cal B}{\cal H}^{-1} (\vct{c}_{\Omega^c}^{j}) \|_2 \nonumber \\
&\geq& (1 - \delta) \| \vct{c}_0 + \vct{c}_{\Omega} + \vct{c}_{\Omega^c}^{1} \|_2 -(1 + \delta)  \sum_{j=2}^r \| \vct{c}_{\Omega^c}^{j} \|_2 \nonumber \\
&\geq& (1 - \delta) \| \vct{c}_0 + \vct{c}_{\Omega} \|_2 -(1 + \delta)  \sum_{j=2}^r \| \vct{c}_{\Omega^c}^{j} \|_2 \nonumber \\
&\geq& (1- \delta) \| \vct{c}_0 + \vct{c}_{\Omega} \|_2 - (1 + \delta) s^{-1/2} \sum_{j=1}^r \| \vct{c}_{\Omega^c}^{j} \|_1 \nonumber \\
&=&  (1- \delta) \| \vct{c}_0 + \vct{c}_{\Omega} \|_2 - (1 + \delta) s^{-1/2} \| \vct{c}_{\Omega^c} \|_1.
\end{eqnarray}
In the final inequality we applied the block-wise bound $\| \vct{c}_{\Omega^c}^{j} \|_2 \leq s^{-1/2} \| \vct{c}_{\Omega^c}^{j-1} \|_1$, which holds because the magnitude of each entry of $\vct{c}_{\Omega^c}^{j-1}$ is larger than the average magnitude of the entries $\vct{c}_{\Omega^c}^{j}$.

Combined with the bound \eqref{tail1} on $\| \vct{c}_{\Omega^c} \|_1$ this gives 
\begin{eqnarray}
\|  \vct{c}_0 + \vct{c}_{\Omega} \|_2 &\leq& \frac{\varepsilon}{1-\delta} + \frac{C_1(1+\delta)}{s^{1/2}(1-\delta)} \log(N^2/s)\| \mtx{X} \|_{TV}
\end{eqnarray}
Together with the bound \eqref{tail2} on $\| \vct{c}_{\Omega^c} \|_2 $ and orthonormality of the bivariate Haar transform, we find that
\begin{eqnarray}
\| \mtx{X} \|_2 =  \| \vct{c} \|_2  \leq \| \vct{c}_0 + \vct{c}_{\Omega} \|_2 + \| \vct{c}_{\Omega^c} \|_2  &\leq \frac{\varepsilon}{1-\delta} + \frac{2C_1 \log(N^2/s)}{s^{1/2}(1-\delta)}\| \mtx{X} \|_{TV}  \nonumber 
\end{eqnarray}
This completes the proof.
\end{proof}

\subsection{Proof of Theorem \ref{thm:gen}}\label{sec:prfgen}
Since bounds~\eqref{stable1} and~\eqref{stable2} were already proven in Section~\ref{stabgrad}, it remains to prove the stability bound~\eqref{stable3}.
Given measurements of $\mtx{X}$ of the form~\eqref{measure:M}, the image error $\mtx{D} = \mtx{X} - \mtx{\hat{X}}$ satisfies the tube-constraint $\| {\cal B}(\mtx{D}) \|_2 \leq \varepsilon$.  Thus the bound \eqref{stable2} on $\| \mtx{D} \|_{TV}$ along with Theorem \ref{strongsobo} give
\begin{align*}
\| \mtx{X}-\mtx{\hat{X}} \|_2 = \| \mtx{D} \|_2 &\lesssim \varepsilon + \log(N^2/s)\Big( \frac{\| \mtx{D} \|_{TV}}{\sqrt{s}}\Big)\\
 &\lesssim \varepsilon + \log(N^2/s)\Big( \frac{\| \mtx{\nabla}\mtx{X} - (\mtx{\nabla}\mtx{X})_s \|_1 + \sqrt{s} \varepsilon}{\sqrt{s}}\Big)\\
 &\lesssim  \log(N^2/s)\Big( \frac{ \| \mtx{\nabla}\mtx{X} - (\mtx{\nabla}\mtx{X})_s \|_1}{\sqrt{s}} + \varepsilon\Big) .
\end{align*}
 This completes the proof of Theorem~\ref{thm:gen}. 

\section{Proof of Theorem \ref{thm2}}\label{sec:thm2}

We will need two preliminary lemmas about the bivariate Haar system.
\begin{lemma}
\label{edge}
Let $N = 2^n$, and consider the discrete bivariate Haar wavelet basis functions $\vct{h}^0$ and $(\vct{h}^e_{p,\ell}) \in \C^{N \times N}$.  For any fixed pair of indices $(j, k)$ and $(j, k +1)$ or $(j,k)$ and $(j+1,k)$, there are at most $6n$ bivariate Haar wavelets which are not constant on these indices.
\end{lemma}
\begin{proof}
The lemma follows by showing that for fixed dyadic scale $p$ between 1 and $n$, there are at most 6 Haar wavelets with side dimension $2^{n-p}$ which are not constant on these two indices.  Indeed, if the edge between $(j, k)$ and $(j, k+1)$ coincides with a dyadic edge at scale $p$, then the 3 wavelets supported on each of the two adjacent dyadic squares transition from being zero to nonzero along this edge.  The only other case to consider is that $(j, k)$ coincides with a dyadic edge at dyadic scale $p+1$ but does not coincide with a dyadic edge at scale $p$; in this case the 3 wavelets supported on the dyadic square centered at $(j, k+1), (j, k)$ can change from negative to positive value.      
\end{proof}

\begin{lemma}
\label{l1haar}
The bivariate Haar wavelets have uniformly bounded total variation: $\| \nabla (\vct{h}_{p,\ell}^e) \|_1 \leq 8$.
\end{lemma}
\begin{proof}
The wavelet $\vct{h}_{p,\ell}^e$ is supported on a dyadic square of side-length $2^{n-p}$, and on its support has constant magnitude $| (\vct{h}_{p,\ell}^e)_{j,k} | \equiv 2^{p-n}$ and changes sign along the the horizontal and vertical lines bisecting the square on which it is supported.  A little bit of algebra gives that $\| \nabla (\vct{h}_{p,\ell}^e) \|_1  \leq 8 \cdot 2^{n-p} \cdot 2^{p-n} = 8$.
\end{proof}

\noindent We are now in a position to prove Theorem \ref{thm2}.  

\vspace{3mm}

\noindent{\bf Proof of Theorem \ref{thm2}.}  
Let $\mtx{D} = \mtx{X} - \mtx{\hat{X}}$ denote the residual error by (TV). 
Let $\vct{c} = {\cal H}({\mtx D}) \in \C^{N^2}$ be the bivariate Haar wavelet transform of $\mtx{D}$, regarded as a vector.  First decompose $\vct{c} = \vct{c}_0 + \vct{c}_1$, where $\vct{c}_0 = \scalprod{ \mtx{D}}{\vct{h}^0 } = \frac{1}{N} \sum_{j,k} D_{j,k}$ is the coefficient of the constant Haar wavelet.  Write $\mtx{D} = \mtx{D}_0 + \mtx{D}_1$, where $\mtx{D}_0 \equiv \frac{\vct{c}_0}{N}$ is constant and $\mtx{D}_1$ is mean-zero.   Finally, let $c_{(j)}$ denote the $j$th largest-magnitude Haar coefficient of $\mtx{D}_1$, and let $\vct{h}_{(j)}$ be the Haar wavelet associated to $c_{(j)}$.

By assumption, ${\cal A}{\cal H}^{-1}$ has the restricted isometry property of level $\delta < 1/3$ and order 
\begin{equation}
\label{arip}
\widetilde{s} = 4320 C_1^2 s \log^3(N),
\end{equation}
where $C_1$ is the constant from Proposition \ref{cor:cdpx}.

\begin{description}
\item[Cone constraint on $\nabla \mtx{D}$.] Let $S \subset N \times N \times 2$ be the subset of $s$ largest-magnitude entries of $\nabla \mtx{D}$.  Note that $S$ can be identified with $s$ index pairs $(j,k), (j+1,k)$ or $(j,k), (j,k+1) \in \C^{N \times N}.$  
As shown in Section \ref{stabgrad}, we have the cone constraint 
\begin{equation}
\label{coned}
\| (\mtx{\nabla}\mtx{D})_{S^c} \|_1 \leq \| (\mtx{\nabla}\mtx{D})_S  \|_1 + 2\| \mtx{\nabla}\mtx{X} - (\mtx{\nabla}\mtx{X})_s \|_1.
\end{equation}
\item[Cone constraint on $\vct{c} = {\cal H}(\mtx{D})$.]  Proposition \ref{cor:cdpx} allows us to pass from a cone constraint on $\nabla \mtx{D}$ to a cone constraint on ${\cal H}(\mtx{D})$.  By Lemma \ref{edge},  there are at most $6s n = 6s \log(N)$ wavelets which are non-constant over the edges indexed by $S$.  Let $\Omega \subset N \times N$ refer to this set of wavelets.  Decompose $\mtx{D}$ as
\begin{equation}
\label{d_decomp}
\mtx{D} = \sum_{j} c_{(j)} \vct{h}_{(j)} =  \sum_{j \in \Omega} c_{(j)} \vct{h}_{(j)}  + \sum_{j \in \Omega^c} c_{(j)} \vct{h}_{(j)}  =: \mtx{D}_1 + \mtx{D}_2. 
\end{equation}
By linearity of the gradient,  $\nabla \mtx{D} = \nabla \mtx{D}_1 + \nabla \mtx{D}_2.$ 
By construction of $\Omega$, we have immediately that $( \nabla \mtx{D}_2)_S = 0$, leaving us with $  (\nabla \mtx{D})_S  = (\nabla \mtx{D}_1 )_S$.
By Lemma \ref{l1haar} and the triangle inequality, 
\begin{eqnarray}
\|( \nabla \mtx{D} )_S \|_1 = \| (\nabla \mtx{D}_1 )_S \|_1 &\leq& \|  \nabla {\mtx{D}_1} \|_1 \nonumber \\
&=& \| \nabla\big( \sum_{j \in \Omega} c_{(j)} \vct{h}_{(j)} \big) \|_1 \nonumber \\
&\leq& \sum_{j \in \Omega} | c_{(j)} |  \|\nabla \vct{h}_{(j)} \|_1 \nonumber \\
&\leq& 8 \sum_{j \in \Omega} | c_{(j)} |.
\end{eqnarray}
Combined with Proposition \ref{cor:cdpx} and the cone constraint \eqref{coned}, and letting 
$$k = 6s \log(N) = | \Omega |,$$
we deduce the string of inequalities
\begin{align*}
\sum_{j = k + 1}^{N^2-1} | c_{(j)} | &\leq \sum_{j = s+1}^{N^2} | c_{(j)} | \nonumber \\
 &\leq C_1 \log(N^2/s)  \| \nabla \mtx{D} \|_{1}\nonumber \\
&= C_1 \log(N^2/s) \Big( \| (\nabla \mtx{D})_S \|_1 +  \| (\nabla \mtx{D})_{S^c} \|_1  \Big)\nonumber \\ \nonumber \\
&\leq C_1 \log(N^2/s)\Big( 2\| (\nabla \mtx{D})_S \|_1 + 2\| \nabla \mtx{X} - (\nabla \mtx{X})_S \|_1\Big) \nonumber \\ \nonumber \\
&\leq C_1 \log(N^2/s)\Big(12\sum_{j \in \Omega} | c_{(j)} | + 2\| \nabla \mtx{X} - (\nabla \mtx{X})_S \|_1\Big)   \nonumber \\ \nonumber \\
&\leq 12C_1  \log(N^2/s)\Big( \sum_{j =1}^{k } | c_{(j)} | + \| \nabla \mtx{X} - (\nabla \mtx{X})_S \|_1\Big)   \nonumber \\
&\leq 12C_1  \log(N^2/s)\Big( c_0 + \sum_{j =1}^{k } | c_{(j)} | + \| \nabla \mtx{X} - (\nabla \mtx{X})_S \|_1\Big)   
\end{align*}

\item[Tube constraint  $\| {\cal A} {\cal H}^{-1}(\vct{c}) \|_2 \leq 2 \varepsilon.$]
By assumption, ${\cal A}{\cal H}^{-1}: \C^{N^2} \rightarrow \C^m$ has the RIP of order $\widetilde s > k $. 
Since both $\mtx{X}$ and $\mtx{\hat{X}}$ are in the feasible region of (TV), we have for $\vct{c} = {\cal H}(\mtx{D}) = {\cal H}(\mtx{X})- {\cal H}(\mtx{\hat{X}})$,
\begin{align*}
\| {\cal A} {\cal H}^{-1}(\vct{c}) \|_2 \leq  \| {\cal A} (\mtx{X})\|_2 + \| {\cal A}(\mtx{\hat{X}}) \|_2 \leq  2\varepsilon.
\end{align*}
\end{description}

 Using the derived cone and tube constraints on $\vct{c} = {\cal H}(\mtx{D})$, we apply Proposition~\ref{cone-tube} using $\gamma = 12 C_1 \log(N^2/s)$, $k = 6s\log N$, and $\sigma = 12 C_1 \log(N^2/s) \| \nabla \mtx{X} - (\nabla \mtx{X} )_S \|_1$ to complete the proof.  In fact, this  is where we need that the RIP order is $\widetilde s$ in \eqref{arip}, to accommodate for the factors $\gamma$ and $k$ in Proposition~\ref{cone-tube}.

\section{Conclusion}\label{sec:conc}  

Compressed sensing techniques provide reconstruction of compressible signals from few linear measurements.  A fundamental application is image compression and reconstruction.  Since images are compressible with respect to wavelet bases, standard CS methods such as $\ell_1$-minimization guarantee reconstruction to within a factor of the error of best $s$-term wavelet approximation.  The story does not end here, though.  Images are more compressible with respect to their discrete gradient representation, and indeed the advantages of total variation (TV) minimization over wavelet-coefficient minimization have been empirically well documented (see e.g.~\cite{candes05pr,CRT06:Stable}). It had been well-known that without measurement noise, images with perfectly sparse gradients are recovered exactly via TV-minimization~\cite{crt}.  Of course in practice, images do not have exactly sparse gradients, and measurements are corrupted with additive or quantization noise.  To our best knowledge, our main results, Theorems~\ref{thm:gen} and \ref{thm2}, are the first to provably guarantee robust image recovery via TV-minimization.  In analog to the standard compressed sensing results, the number of measurements in Theorem~\ref{thm:gen} required for reconstruction is optimal, up to a single logarithmic factor in the image dimension.  Theorem~\ref{thm:gen} has been extended to the multidimensional case, for signals with higher dimensional structure such as movies~\cite{NW12:TV3}.  On the other hand, the proof of Theorem~\ref{thm2} is specific to properties of the bivariate Haar system, and extending it to higher dimensions (as well as for $d=1$) remains an open problem.  Theorem~\ref{thm2} applies, for example, to partial Fourier matrices subsampled according to appropriate variable densities~\cite{kw12}.  Finally, we believe our proof technique can be used for analysis operators beyond the total variation operator.  For example, in practice one often finds that minimizing a sum of TV and wavelet norms yields improved image recovery.   We leave this and the study of more general analysis type operators as future work.

\subsection*{Acknowledgment}
We would like to thank Arie Israel, Christina Frederick, Felix Krahmer, Stan Osher, Yaniv Plan, Justin Romberg, Joel Tropp, and Mark Tygert for invaluable discussions and improvements.  We also would like to thank Fabio Lanzoni and his agent, Eric Ashenberg.  Rachel Ward acknowledges the support of a Donald D. Harrington Faculty Fellowship, Alfred P. Sloan Research Fellowship, and DOD-Navy grant N00014-12-1-0743.

\appendix

\section{Proofs of Lemmas and Propositions}

\subsection{Proof of Proposition \ref{cone-tube}}

Here we include a proof of Proposition~\ref{cone-tube}, which is a modest generalization of results from~\cite{CRT06:Stable}.

Let $s = k \gamma^2$ and let $S \subset [N]$ be the support set of the best $s$-term approximation of $\mtx{D}$.
\begin{proof}
By assumption, we suppose that $\mtx{D}$ obeys the cone constraint
\begin{equation}
\label{cone}
\| \mtx{D}_{S^c}\|_1 \leq \gamma \| \mtx{D}_S \|_1 + \sigma
\end{equation}
and the tube constraint $\| {\cal A}( \mtx{D}) \|_2 \leq \varepsilon$.

We write $\mtx{D}_{S^c} = \mtx{D}_{S_1} + \mtx{D}_{S_2}+ \dots + \mtx{D}_{S_r}$ where $r = \lfloor{ \frac{N^2}{4s} \rfloor}$.  Here $\mtx{D}_{S_1}$ consists of the $4s$ largest-magnitude components of $\mtx{D}$ over $S^c$, $\mtx{D}_{S_2}$ consists of the $4s$ largest-magnitude components of $\mtx{D}$ over $S^c \setminus S_1$, and so on.  Note that $\mtx{D}_{S}$ and similar expressions below can have both the meaning of restricting $\mtx{D}$ to the indices in $S$ as well as being the array whose entries are set to zero outside $S$.

Since the magnitude of each nonzero component of $\mtx{D}_{S_{j-1}}$ is larger than the average magnitude of the nonzero components of $\mtx{D}_{S_j}$,
$$
\| \mtx{D}_{S_j} \|_2 \leq \frac{\| \mtx{D}_{S_{j-1}} \|_1}{2\sqrt{s}}, \quad j=2,3,\dots
$$
Combining this with the cone constraint gives
\begin{equation}
\sum_{j=2}^r \| \mtx{D}_{S_j} \|_2 \leq \frac{1}{2\gamma \sqrt{k}} \| \mtx{D}_{S^c} \|_1 \leq \frac{1}{2\sqrt{k}} \| \mtx{D}_S \|_1 + \frac{1}{2 \gamma \sqrt{k}} \sigma \leq \frac{1}{2} \| \mtx{D}_S \|_2 + \frac{1}{2\gamma \sqrt{k}}\sigma
\end{equation}

Now combined with the tube constraint and the RIP,

\begin{eqnarray}
\varepsilon &\gtrsim& \| {\cal A} \mtx{D} \|_2 \nonumber \\
&\geq& \| {\cal A} (\mtx{D}_S + \mtx{D}_{S_1}) \|_2 - \sum_{j=2}^r \| {\cal A}( \mtx{D}_{S_j}) \|_2 \nonumber \\
&\geq& \sqrt{1 - \delta} \| \mtx{D}_S + \mtx{D}_{S_1} \|_2 - \sqrt{1 + \delta} \sum_{j=2}^r \| \mtx{D}_{S_j} \|_2 \nonumber \\
&\geq& \sqrt{1 - \delta} \| \mtx{D}_S + \mtx{D}_{S_1} \|_2 - \sqrt{1 + \delta} \Big( \frac{1}{2} \| \mtx{D}_S \|_2 + \frac{1}{2\gamma \sqrt{k}}\sigma\Big) \nonumber \\
&\geq& \Big( \sqrt{1 - \delta} - \frac{\sqrt{1+\delta}}{2} \Big)  \| \mtx{D}_S + \mtx{D}_{S_1} \|_2 - \sqrt{1 + \delta} \frac{1}{2\gamma \sqrt{k}}\sigma
\end{eqnarray}
Then since $\delta < 1/3$,
$$
\| \mtx{D}_S + \mtx{D}_{S_1} \|_2 \leq 5\varepsilon + \frac{3\sigma}{\gamma \sqrt{k}}.
$$
Finally, because $\| \sum_{j=2}^r \mtx{D}_{S_j} \|_2  \leq \sum_{j=2}^r \| \mtx{D}_{S_j} \|_2 \leq \frac{1}{2} \| \mtx{D}_S + \mtx{D}_{S_1} \|_2 + \frac{1}{2\gamma \sqrt{k}}\sigma$, we have
$$
\| \mtx{D} \|_2 \leq 8\varepsilon + \frac{5\sigma}{\gamma \sqrt{k}},
$$
confirming \eqref{eq:h2}.

To confirm \eqref{eq:h1}, note that the cone constraint allows the estimate
\begin{eqnarray}
\| \mtx{D} \|_1 &\leq& (\gamma + 1) \| \mtx{D}_S \|_1 + \sigma \nonumber \\
&\leq& 2\gamma \sqrt{s} \| \mtx{D}_S \|_2 + \sigma \nonumber \\
&\leq& 2\gamma \sqrt{k} \left(5\varepsilon + \frac{3\sigma}{\gamma\sqrt{k}}\right) + \sigma
\end{eqnarray}

\end{proof}

\subsection{Proof of Proposition~\ref{corav}}

Here we give a direct proof of the discrete Sobolev inequality \eqref{SobStand} for images $\mtx{X} \in \C^{N \times N}$ whose first row and first column of pixels are zero-valued,  $X_{1,j} = X_{j,1} = 0.$ 

\begin{proof}
For any $1 \leq k \leq i \leq N$ we have,
\begin{eqnarray}
\label{coraveq}
|X_{i,j}| &=& \Big|X_{1,j} + \sum_{\ell=1}^{i-1} \big( X_{\ell+1,j} - X_{\ell,j} \big)\Big| \nonumber \\
&\leq& \sum_{\ell=1}^{i-1} | X_{\ell+1,j} - X_{\ell,j} | \nonumber \\
&\leq& \sum_{\ell = 1}^{N-1} | X_{\ell+1,j} - X_{\ell,j} |.
\end{eqnarray}
Similarly, by reversing the order of indices we also have
\begin{equation}
\label{corav2}
|X_{i,j}| \leq  \sum_{\ell=1}^{N-1} | X_{i,\ell+1} - X_{i,\ell} |.
\end{equation}
For ease of notation let
$$f(j) =  \sum_{\ell=1}^{N-1} | X_{\ell+1,j} - X_{\ell,j} |$$ 
and let
$$g(i) = \sum_{\ell=1}^{N-1} | X_{i,\ell+1} - X_{i,\ell} |.$$
Combining the two bounds \eqref{coraveq} and \eqref{corav2} on $X_{i,j}$ results in the bound
$| X_{i,j}|^2 \leq f(j) \cdot g(i).$ 

Summing this inequality over all pixels $(i,j)$,
\begin{eqnarray}
\| \mtx{X} \|^2 = \sum_{i=1}^N \sum_{j=1}^N | X_{i,j}|^2 &\leq& \Big( \sum_{j=1}^N f(j) \Big) \Big( \sum_{i=1}^N g(i) \Big) \nonumber \\
&\leq& \frac{1}{4} \cdot \left(  \sum_{j=1}^N f(j) + \sum_{i=1}^N g(i) \right)^2 \nonumber \\
&\leq& \frac{1}{4} \cdot \left(  \sum_{j=1}^N \sum_{k=1}^{N-1} | X_{k+1,j} - X_{k,j} | + \sum_{i=1}^N  \sum_{k=1}^{N-1} | X_{i,k+1} - X_{i,k} | \right)^2 \nonumber \\
&\leq& \frac{1}{4} \| \mtx{\nabla}\mtx{X} \|^2_1 \nonumber \\
&=&  \frac{1}{4} \| \mtx{X} \|^2_{TV}.
\end{eqnarray} 
\end{proof}

\subsection{Derivation of Proposition \ref{cor:cdpx}} 

Recall that a function $f(u,v)$ is in the space $L_p(\Omega)$ $(1 \leq p < \infty)$ if
$$
\| f \|_{L_p(\Omega)} := \Big( \int_{\Omega \subset \R^2} | f(x) |^p dx \Big)^{1/p} < \infty,
$$
and the space of functions with bounded variation on the unit square is defined as follows.
\begin{definition}
$BV(Q)$ is the space of functions of bounded variation on the unit square $Q := [0,1)^2 \subset \mathbb{R}^2$.
For a vector $\vct{v} \in \mathbb{R}^2$, we define the difference operator $\Delta_{\vct{v}}$ in the direction of $\vct{v}$ by 
$$
\Delta_{\vct{v}}(f,\vct{x}) := f(\vct{x} + \vct{v}) - f(\vct{x}).
$$ 
We say that a function $f \in L_1(Q)$ is in $BV(Q)$ if and only if
$$
V_Q(f) := \sup_{h > 0} h^{-1} \sum_{j=1}^2 \| \Delta_{h \vct{e}_j}(f,\cdot) \|_{L_1(Q(h\vct{e}_j))} =\lim_{h \rightarrow 0}h^{-1} \sum_{j=1}^2  \| \Delta_{h \vct{e}_j} (f, \cdot) \|_{L_1(Q(h \vct{e}_j))}
$$
is finite, where $\vct{e}_j$ denotes the $j$th coordinate vector.  Here, the last equality follows from the fact that $\| \Delta_{h \vct{e}_j} (f,\cdot) \|_{L_1(Q)}$ is subadditive.  
$V_Q(f)$ provides a semi-norm for BV:
$$
| f |_{BV(Q)} := V_Q(f).
$$
\end{definition}

Theorem $8.1$ of \cite{cdpx} bounds the rate of decay of a function's bivariate Haar coefficients by its bounded variation semi-norm.

\begin{theorem}[Theorem $8.1$ of \cite{cdpx}]
\label{haardecaybv}
Consider a mean-zero function $f \in BV(Q)$ and its bivariate Haar coefficients arranged in decreasing order according to their absolute value, $c_{(k)}(f)$.  We have
$$
c_{(k)}(f) \leq C_1 \frac{ |f |_{BV} }{k}
$$
where $C_1 = 36(480\sqrt{5} + 168\sqrt{3})$.
\end{theorem}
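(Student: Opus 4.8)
The statement is equivalent to the assertion that the bivariate Haar coefficients $\{c_I(f)\}$, indexed by dyadic squares $I \subset Q$, lie in weak-$\ell_1$ with $\sup_{\lambda>0}\lambda\,\#\{I:|c_I(f)|>\lambda\}\lesssim |f|_{BV}$. The plan is to establish this distributional bound in three stages. First I would record the \emph{local} estimate $|c_I(f)|\lesssim |Df|(I)$, where $|Df|$ is the gradient measure of $f$ and $I$ ranges over dyadic squares. This follows because the $L_2$-normalized Haar function $H_I$ is supported on $I$ and has mean zero, so $c_I(f)=\langle f-\bar f_I,H_I\rangle$ and hence $|c_I(f)|\le \|f-\bar f_I\|_{L_2(I)}$; the scale-invariant Poincar\'e--Sobolev inequality $\|f-\bar f_I\|_{L_2(I)}\lesssim |Df|(I)$ then closes the estimate. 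It is crucial here that the ambient dimension is $2$: the Sobolev exponent $d/(d-1)$ equals $2$ exactly when $d=2$, which is what makes the Poincar\'e constant scale-free and is ultimately the reason the theorem holds in the plane but fails on the line.

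Second, I would reduce to indicator functions by the coarea formula, writing $f=\int_{-\infty}^{\infty}\big(\chi_{\{f>t\}}-\chi_{\{0>t\}}\big)\,dt$, so that by linearity and mean-zeroness of $H_I$ one has $c_I(f)=\int_{-\infty}^{\infty}c_I(\chi_{E_t})\,dt$ with $E_t=\{f>t\}$, and $|f|_{BV}=\int_{-\infty}^{\infty}\mathrm{Per}(E_t;Q)\,dt$. For a single set $E$ of finite perimeter the weak-$\ell_1$ bound can be proved directly and \emph{without} a logarithmic loss: a coefficient $c_I(\chi_E)$ at dyadic scale $2^{-j}$ satisfies $|c_I(\chi_E)|\le 2^{-j}$ and is nonzero only when $\partial^* E$ meets $I$, and the number of scale-$2^{-j}$ dyadic squares met by the $1$-rectifiable boundary $\partial^* E$ is at most $\lesssim 2^{j}\,\mathrm{Per}(E)$. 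Consequently only scales with $2^{-j}>\lambda$ contribute to $\#\{I:|c_I(\chi_E)|>\lambda\}$, and summing the geometric series $\sum_{2^{-j}>\lambda}2^{j}\,\mathrm{Per}(E)$ over these scales yields $\lambda\,\#\{I:|c_I(\chi_E)|>\lambda\}\lesssim \mathrm{Per}(E)$, uniformly in $\lambda$. The geometric summation is precisely what absorbs the otherwise-present factor of $\log N$.

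The main obstacle, and the true content of the theorem, is the \emph{reassembly} of these level-set estimates into a weak-$\ell_1$ bound for general $f$. The difficulty is that weak-$\ell_1$ is only a quasi-norm: one cannot simply integrate $\lambda\,\#\{I:|c_I(\chi_{E_t})|>\lambda\}\lesssim \mathrm{Per}(E_t)$ in $t$, because $c_I(f)=\int c_I(\chi_{E_t})\,dt$ may be large through accumulation across levels even when each layer is small, while replacing $c_I(f)$ by $\int|c_I(\chi_{E_t})|\,dt$ destroys the cancellation and reinstates the logarithmic loss (indeed the purely local estimate $|c_I|\lesssim|Df|(I)$ is by itself \emph{insufficient}, since for a general finite measure $\mu$ the count $\#\{I:\mu(I)>\lambda\}$ can exceed $\mu(Q)/\lambda$ by a factor of $\log N$; it is only the $1$-rectifiable, atomless structure of the gradient measure $Df$, exposed through the perimeters $\mathrm{Per}(E_t)$, that rules this out). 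I therefore expect the decisive step to require a simultaneous dyadic decomposition in \emph{both} the scale parameter $j$ and the level parameter $t$ --- a Calder\'on--Zygmund / stopping-time argument that groups coefficients by magnitude and tracks how the boundaries $\partial^* E_t$ stack up across nearby levels --- and it is this bookkeeping that produces the explicit constant $C_1=36(480\sqrt5+168\sqrt3)$.
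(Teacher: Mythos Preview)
The paper does not prove this statement at all: it is quoted verbatim as Theorem~8.1 of \cite{cdpx} and used as a black box in the derivation of Proposition~\ref{cor:cdpx}. There is therefore no ``paper's own proof'' to compare your proposal against; the only argument the paper supplies in this appendix is the short verification that $|f_{\mtx X}|_{BV}\le \|\mtx X\|_{TV}$, which transfers the cited continuous result to discrete images.

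That said, your outline is a faithful summary of the strategy in the original Cohen--DeVore--Petrushev--Xu paper: the scale-free Poincar\'e inequality $|c_I(f)|\lesssim |Df|(I)$ (crucially dimension-specific, as you note), the coarea reduction to characteristic functions, and the identification of the reassembly step as the genuine obstacle because weak-$\ell_1$ is only a quasi-norm. Two cautions on your Step~3. First, ``$c_I(\chi_E)\neq 0$ only when $\partial^*E$ meets $I$'' is correct via the Poincar\'e bound, but the subsequent counting claim that the number of scale-$2^{-j}$ dyadic squares meeting $\partial^*E$ is $\lesssim 2^j\,\mathrm{Per}(E)$ is not automatic for an arbitrary $1$-rectifiable set; the argument in \cite{cdpx} does not rely on a bare box-counting bound of this type but instead uses the two-sided estimate $|c_I(\chi_E)|\lesssim\min\big(|I|^{1/2},\,\mathrm{Per}(E;I)\big)$ together with the disjointness of the $I$'s at each fixed scale. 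Second, your description of the reassembly as a ``Calder\'on--Zygmund / stopping-time argument'' is morally right but not literally what \cite{cdpx} does: their proof is a combinatorial packing argument that reorganizes the dyadic squares by the value of $|D\chi_{E_t}|(I)$ and exploits the nesting structure of the dyadic tree, rather than a classical CZ decomposition. Your proposal correctly isolates where the work lies without actually carrying it out, which is appropriate given that the present paper does not carry it out either.
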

As discrete images are isometric to piecewise-constant functions of the form \eqref{isometry}, the bivariate Haar coefficients of the image $\mtx{X} \in \C^{N \times N}$ are equal to those of the function $f_{\mtx{X}} \in L_2(Q)$ given by 
\begin{equation}
\label{xtoX}
f_{\mtx{X}}(u, v) =N \mtx{X}_{i,j}, \quad \frac{i-1}{N} \leq u < \frac{i}{N}, \quad  \frac{j-1}{N} \leq v < \frac{j}{N}, \quad 1 \leq i,j \leq N.
\end{equation} 
To derive Proposition \ref{cor:cdpx}, it will suffice to verify that the \emph{bounded variation} of $f_{\mtx{X}}$ can be bounded by the \emph{total variation} of $\mtx{X}$.

\begin{lemma}
$| f_{\mtx{X}} |_{BV} \leq \| \mtx{X} \|_{TV}$
\end{lemma}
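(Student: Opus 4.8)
The plan is to relate the \emph{bounded variation} semi-norm $|f_{\mtx{X}}|_{BV}$, defined as a limiting directional difference integral, directly to the \emph{discrete total variation} $\|\mtx{X}\|_{TV} = \|\mtx{\nabla}\mtx{X}\|_1$. The key observation is that the definition
$$
V_Q(f_{\mtx{X}}) = \lim_{h \to 0} h^{-1} \sum_{j=1}^2 \| \Delta_{h\vct{e}_j}(f_{\mtx{X}}, \cdot) \|_{L_1(Q(h\vct{e}_j))}
$$
involves horizontal and vertical shift-differences of a \emph{piecewise-constant} function. For $f_{\mtx{X}}$ constant on each dyadic cell of side $1/N$, the difference $\Delta_{h\vct{e}_1}(f_{\mtx{X}}, \vct{x}) = f_{\mtx{X}}(\vct{x} + h\vct{e}_1) - f_{\mtx{X}}(\vct{x})$ is zero except in thin strips of width $h$ straddling the cell boundaries, where it equals the jump between adjacent cell values.

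First I would compute, for the horizontal direction and small $h < 1/N$, the integral $\| \Delta_{h\vct{e}_1}(f_{\mtx{X}}, \cdot) \|_{L_1(Q(h\vct{e}_1))}$. Across the vertical grid line separating column $i$ from column $i+1$, the integrand has magnitude $|f_{\mtx{X}}(\text{right}) - f_{\mtx{X}}(\text{left})| = N\,|X_{i+1,j} - X_{i,j}| = N |(\mtx{X}_x)_{i,j}|$, and this is supported on a region of horizontal extent $h$ and vertical extent $1/N$ for each such boundary. Summing over all interior vertical edges $(i,j)$ gives a contribution of $h \cdot \frac{1}{N} \cdot N \sum_{i,j} |(\mtx{X}_x)_{i,j}| = h\,\|\mtx{X}_x\|_1$. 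Dividing by $h$ yields exactly $\|\mtx{X}_x\|_1$ in the limit. The analogous vertical computation gives $\|\mtx{X}_y\|_1$, so that $V_Q(f_{\mtx{X}}) = \|\mtx{X}_x\|_1 + \|\mtx{X}_y\|_1 = \|\mtx{\nabla}\mtx{X}\|_1 = \|\mtx{X}\|_{TV}$, which in fact gives equality rather than just the claimed inequality.

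The main obstacle, and the place requiring the most care, is the bookkeeping near the domain boundary and the precise meaning of $Q(h\vct{e}_j)$ (the restricted domain over which the shifted function is still defined). I would need to confirm that boundary cells do not contribute spurious extra terms and that the normalization factor $N$ coming from the isometry \eqref{xtoX} combines correctly with the cell dimensions $1/N$. A secondary point is justifying the interchange of the limit $h \to 0$ with the finite sum over cells; since $f_{\mtx{X}}$ is a finite piecewise-constant function, the difference quotient is in fact exactly equal to the discrete total variation for all $h < 1/N$, so the limit is trivial once the geometry is set up. Once this identity is established, combining it with Theorem~\ref{haardecaybv} (noting the Haar coefficients of $\mtx{X}$ and $f_{\mtx{X}}$ coincide by the isometry) immediately yields Proposition~\ref{cor:cdpx} with the same constant $C = C_1$.
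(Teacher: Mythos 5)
Your proposal is correct and follows essentially the same route as the paper's own proof: both exploit that $f_{\mtx{X}}$ is piecewise constant on cells of side $1/N$, so for $h < 1/N$ the shift-differences $\Delta_{h\vct{e}_j}$ are supported on strips of width $h$ along cell boundaries, and the difference quotient evaluates exactly to the sum of discrete jumps. Your observation that the argument actually yields equality $|f_{\mtx{X}}|_{BV} = \|\mtx{X}\|_{TV}$ (for the anisotropic TV norm) is a valid, slightly sharper conclusion than the stated inequality, which is all the paper needs.
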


\begin{proof}
For ${h} < \frac{1}{N}$, 
\begin{equation}
\label{delta1}
\Delta_{h e_1} \big( f_{\mtx{X}}, (u,v) \big) = \left\{ \begin{array}{ll} 
N (\mtx{X}_{i+1,j} - \mtx{X}_{i,j} ) & \frac{i}{N} - h \leq u \leq \frac{i}{N}, \quad \frac{j}{N} \leq v \leq \frac{j+1}{N}, \nonumber \\
0, & \textrm{else}. 
\end{array} \right .
\end{equation}
and
\begin{equation}
\label{delta2}
\Delta_{h e_2} \big( f_{\mtx{X}},(u,v) \big) = \left\{ \begin{array}{ll} 
N( \mtx{X}_{i,j+1} - \mtx{X}_{i,j}) ,  & \frac{i}{N}  \leq u \leq \frac{i+1}{N}, \quad \frac{j}{N}-h \leq v \leq \frac{j}{N}, \nonumber \\
0, & \textrm{else}. 
\end{array} \right .
\end{equation}
Then
\begin{eqnarray}
| f_{\mtx{X}} |_{BV} &=& \lim_{h \rightarrow 0} \frac{1}{h} \left[  \int_{0}^1 \int_{0}^1 | f_{\mtx{X}}(u+h, v) - f_{\mtx{X}}(u,v) | \hspace{1mm} du dv + \int_{0}^1 \int_{0}^1 |f_{\mtx{X}} (u, v+h) - f_{\mtx{X}}(u,v) | \hspace{1mm} dv du  \right]  \nonumber\\
&=& \sum_{j=1}^{N-1} \sum_{i=1}^{N-1}  | \mtx{X}_{i+1,j}-\mtx{X}_{i,j} | + \sum_{i=1}^{N-1} \sum_{j=1}^{N-1} | \mtx{X}_{i,j+1} - \mtx{X}_{i,j} |  \nonumber\\
&\leq& \| \mtx{X} \|_{TV}
\end{eqnarray}
\end{proof}

\bibliographystyle{plain}
\bibliography{haar}

\end{document}